\documentclass{article} 
\usepackage[preprint]{colm2026_conference}

\usepackage{microtype}
\usepackage{hyperref}
\usepackage{url}
\usepackage{booktabs}

\usepackage{url}            
\usepackage{hyperref} 
\usepackage{xcolor}
\usepackage[ruled,vlined]{algorithm2e}
\definecolor{curveblue}{HTML}{00004D} 
\usepackage{graphicx} 
\usepackage{multirow}
\usepackage{booktabs}       
\usepackage{amsfonts}       
\usepackage{nicefrac}       
\usepackage{float}
\usepackage{makecell}
\usepackage{microtype}      
\usepackage{graphicx}
\usepackage{subcaption}
\usepackage{arydshln}
\usepackage{xr}
\usepackage{array}
\usepackage{longtable} 
\usepackage[title,toc,page]{appendix}
\usepackage{tabularx}

\usepackage{amsmath}
\usepackage{amssymb}
\usepackage{mathtools}
\usepackage{amsthm}
\usepackage[svgnames, x11names]{xcolor}
\usepackage{framed}
\usepackage{quoting}
\colorlet{shadecolor}{LavenderBlush2}
\colorlet{shadecolor}{Crimson}
\usepackage{microtype}
\usepackage{graphicx}
\usepackage{subcaption}
\usepackage{booktabs} 
\usepackage{tcolorbox}
\usepackage{hyperref}
\usepackage{wrapfig}

\usepackage{amsmath}
\usepackage{amssymb}
\usepackage{mathtools}
\usepackage{amsthm}

\usepackage[capitalize,noabbrev]{cleveref}

\theoremstyle{plain}
\newtheorem{theorem}{Theorem}[section]
\newtheorem{proposition}[theorem]{Proposition}
\newtheorem{lemma}[theorem]{Lemma}
\newtheorem{corollary}[theorem]{Corollary}
\theoremstyle{definition}

\theoremstyle{remark}

\usepackage[textsize=tiny]{todonotes}

\usepackage{lineno}

\definecolor{darkblue}{rgb}{0, 0, 0.5}
\hypersetup{colorlinks=true, citecolor=darkblue, linkcolor=darkblue, urlcolor=darkblue}

\title{Rethinking Token Prediction: \\Tree-Structured Diffusion Language Model}

\author{Zihao Wu, Haoming Yang, Juncheng Dong \& Vahid Tarokh \\
Department of Electrical \& Computer Engineering\\
Duke University\\
}

\begin{document}

\ifcolmsubmission
\linenumbers
\fi

\maketitle

\begin{abstract}
Discrete diffusion language models have emerged as a competitive alternative to auto-regressive language models, but training them efficiently under limited parameter and memory budgets remains challenging. Modern architectures are predominantly based on a full-vocabulary token prediction layer, which accounts for a substantial fraction of model parameters (e.g., more than $20\%$ in small scale DiT-style designs) and often dominates peak GPU memory usage. This leads to inefficient use of both parameters and memory under constrained training resources. To address this issue, we revisit the necessity of explicit full-vocabulary prediction, and instead exploit the inherent structure among tokens to build a tree-structured diffusion language model. Specifically, we model the diffusion process with intermediate latent states corresponding to a token's ancestor nodes in a pre-constructed vocabulary tree. This tree-structured factorization exponentially reduces the classification dimensionality, makes the prediction head negligible in size, and enables reallocation of parameters to deepen the attention blocks. Empirically, under the same parameter budget, our method reduces peak GPU memory usage by half while matching the perplexity performance of state-of-the-art discrete diffusion language models.
\end{abstract}

\section{Introduction}

While state-of-the-art language models are predominantly autoregressive~\citep{brown2020language, achiam2023gpt}, recent advances in discrete diffusion language models (DLMs)~\citep{lou2023discrete,sahoo2024simple,nie2025large} have highlighted a competitive and efficient alternative pathway for language modeling~\citep{lou2023discrete}. Nevertheless, existing DLMs rely on \emph{token-level} prediction, which remains a major computational bottleneck, especially at smaller model scales. Since modern tokenizers use large vocabulary, token-level prediction requires a classification layer with substantial parameter cost and memory footprint~\citep{nie2025large}. In the small- and base-scale DiT-style architectures~\citep{sahoo2024simple, rutte2025generalized}, for instance, the output projection matrix can account for more than $20\%$ and $12\%$ of the total parameters, respectively. 
\begin{figure*}[htbp]
  \centering
  \includegraphics[width=\linewidth]{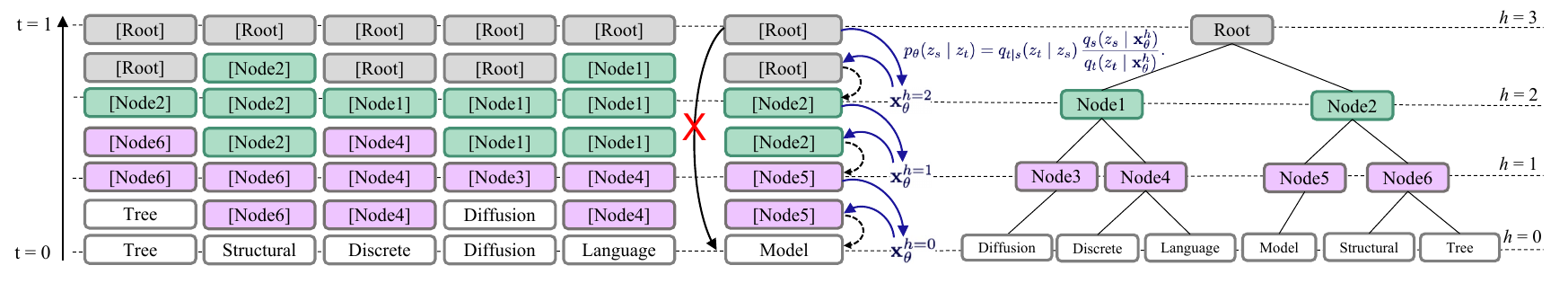}
  \caption{Generation process (left): Our algorithm features in-level child prediction rather than standard token prediction, delivering substantial efficiency gains while achieving improved performance. Token tree (right): Child prediction is enabled by a principled token tree, in which each state evolves strictly between adjacent levels.}
  \label{fig:main-graph}
\end{figure*}This output layer also dominates peak activation memory during training, directly constraining efficient training and deployment in compute-limited settings, including edge-device scenarios~\citep{shamshoum2025compact}.


To address this challenge, we observe that token-level classification in diffusion language models neglects \emph{exploitable structure} in the token space, which can be leveraged to improve model efficiency. Subword vocabulary is inherently hierarchical; semantically or syntactically related tokens often form clusters; and in many contexts, prediction only requires distinguishing among a small subset of plausible tokens \citep{mielke2021between, dar2023analyzing, kurtic2023ziplm, gao2023retrieval}. These observations suggest that token prediction often operates over a structured, context-dependent subset of the vocabulary rather than the full token set \citep{zhu2025sentencekv, palacio2023evaluating, ugare2024syncode}. Yet existing diffusion language models do not explicitly leverage this structure. This mismatch raises a natural question:

\begin{tcolorbox}[
  colback=white,
  colframe=black!18,
  boxrule=0.6pt,
  arc=2.5mm,
  left=10pt,right=10pt,top=9pt,bottom=9pt,
  before skip=10pt, after skip=10pt
]
{\itshape 
If token prediction is inherently structured and context-dependent, can a diffusion language model leverage this structure to improve model efficiency under limited training budgets?
 }
\end{tcolorbox}

One natural way to impose structure on token prediction is through the construction of a hierarchical vocabulary tree. Tokens are first organized according to a chosen criterion, such as semantic or syntactic structure \citep{radford2019language, karanikolas2023large}, and prediction is then factorized into a sequence of smaller decisions rather than a single $|V|$-way classification. This substantially reduces the effective output dimensionality and the memory cost of the classification head. Similar ideas are explored in autoregressive models prior to the transformer era through hierarchical softmax \citep{morin2005hierarchical} and adaptive softmax \citep{grave2017efficient}. However, because autoregressive prediction is coupled to left-to-right generation, hierarchical decisions are not naturally aligned with the generation process. Diffusion language models instead refine token representations iteratively, making them a more natural setting for coarse-to-fine prediction over a hierarchy.

In light of these insights, we propose the \emph{Tree-Structured Diffusion Language Model} (\textbf{TDLM}). TDLM models a pre-constructed vocabulary tree through a discrete diffusion process, where both the forward and reverse transitions are closely aligned with the parent–child relationships of the tree. This requires a modified diffusion formulation and a new derivation of the training ELBO, resulting in a training procedure that differs fundamentally from existing DLMs \citep{sahoo2024simple, rutte2025generalized}. By reducing the effective size of the classification layer, TDLM lowers training memory usage and frees parameters for the backbone, yielding substantial efficiency gains while maintaining strong modeling performance comparable to the state-of-the-art methods under limited computational resource.

\section{Preliminary}
Discrete diffusion models (DDMs) generalize diffusion from continuous spaces to a finite state space $\mathcal{Z}$. Given data $X\sim q_0$ taking values in $\mathcal{Z}$, DDMs first define a family of forward Markov transition kernels $\{q_{t|s}\}_{0\leq s\leq t\leq T}$ that progressively corrupt data $X$ into a simple noise prior at the terminal time $T=1$. During training, DDMs learn the corresponding reverse-time dynamics, generating data through iterative denoising steps.

\textbf{CTMC.} A standard approach to instantiate DDMs is via a time-inhomogeneous continuous-time Markov chain (CTMC) $\{z_t\}_{t\in [0,1]}$, characterized by a generator (i.e., forward transition rate) matrix $Q_t$ \citep{campbell2022continuous}. We represent each state $z\in \mathcal{\mathcal{Z}}$ as a one-hot encoding vector $\mathbf{z}\in \{0,1\}^{|\mathcal{Z}|}$ with $\sum_i \mathbf{z}_i = 1$. The generator then specifies the infinitesimal forward kernel by 
\begin{align*}
&q_{t \mid t-\Delta t}(z \mid \tilde z)
= \delta_{z,\tilde z} \;+\; Q_t(\tilde z, z)\,\Delta t \;+\; o(\Delta t),
\quad \Delta t \downarrow 0,
\\
&Q_t(\tilde z, z) \ge 0, \forall\, z \neq \tilde z,
\quad \text{and}\;\;\; Q_t(\tilde z, \tilde z)
= - \sum_{z \neq \tilde z} Q_t(\tilde z, z),
\end{align*}
where $\delta_{z,\tilde z}$ is the Kronecker delta function, which equals $1$ if $z=\tilde{z}$, and $0$ otherwise.

For $s<t$, let $P_{t|s}$ denote the cumulative transition matrix from time $s$ to $t$, such that 
\[
q_{t\mid s}(z_t\mid z_s)=\mathrm{Cat}\!\big(z_t;\,P_{t\mid s}\mathbf{z_s}\big),
\]
i.e., the $z_s$-th column of $P_{t\mid s}$ gives the transition probabilities starting from state $z_s$. Then, $P_{t|s}$ satisfies the Kolmogorov forward/backward equations
\[
\frac{\partial P_{t\mid s}}{\partial t}=Q_t^\top P_{t\mid s},\quad
\frac{\partial P_{t\mid s}}{\partial s}=-P_{t\mid s}Q_s^\top,\quad
P_{s\mid s}=I.
\]

\textbf{MDLM.} Masked diffusion language model (MDLM) is initially formulated as a discrete-time Markov chain, but admits a CTMC interpretation under continuous parameterization \citep{sahoo2024simple}. Let $\mathbf{m}$ be the one-hot vector for the absorbing token [MASK]. Given a clean token $x$, MDLM defines the forward marginal at time $t\in [0,1]$ by 
\begin{align*}
    q_t(z_t \mid x)
&= \mathrm{Cat}\!\bigl(z_t;\,\alpha_t\,\mathbf{x} + (1-\alpha_t)\,\mathbf{m}\bigr),
\end{align*}
where $\{\alpha_t\}_{t\in {[0,1]}}$ is a decreasing schedule with $\alpha_0 = 1$ and $\alpha_1 = 0$, so the process smoothly interpolates from the data distribution at $t=0$ to the absorbing mask at $t=1$.

\textbf{GIDD.} Generalized interpolating diffusion (GIDD) explicitly incorporates a CTMC formulation and derives the corresponding transition rates. Specifically, it extends MDLM by replacing the fixed absorbing token $\mathbf{m}$ with a time-dependent mixing distribution $\pi_t$ \citep{rutte2025generalized}, yielding the forward marginal
\begin{align*}
    q_t(z_t \mid x)
&= \mathrm{Cat}\!\bigl(z_t;\,\alpha_t\,\mathbf{x} + (1-\alpha_t)\,\boldsymbol{\pi}_t\bigr).
\end{align*}

\section{Tree-Structured Diffusion Language Model}
\textbf{Tree and Notation.}
We start by defining a token tree $\mathcal{T}_{\mathrm{token}}$ with nodes $\mathcal{N}$. To accommodate for diffusion process, we assume that all leaf nodes have the same depth, such that,
\begin{equation*}
    \mathcal{N} = \mathcal{L} \cup \left(\bigcup^{H}_{h=1}\mathcal{I}_h\right),
\end{equation*}

where $H$ is the height of the tree $\mathcal{T}_{\mathrm{token}}$, $\mathcal{L}:=\{N_x: x \in \mathcal{V}\}$ is the set of leaf nodes $N_x$, each representing a token $x$ in the vocabulary $\mathcal{V}$, and $\mathcal{I}_h$ contains all the nodes of height $h$.

Given the tree structure, the diffusion process $\{z_t\}_{t\in [0,1]}$ now lives in the finite state space $\mathcal{N}$. In other words, $z_t$ is a random process whose event space is the token tree's set of nodes. 
To align $z_t$ with the tree structure, we introduce level thresholds $\{t_i\}\subset [0,T]$ with $0=t_0<t_1<...<t_H = T = 1$, such that, as $z_t$ moves within $[t_h, t_{h+1}]$, it is strictly confined within $\mathcal{I}_h\cup\mathcal{I}_{h+1}$.

To facilitate notation, we further define several auxiliary functions. 
First, we define an $h$-dependent \textit{ancestor function}  
\begin{equation}
    \Gamma^{h}_{\uparrow}(z_t): \bigcup_{h' \le h}\mathcal{I}_{h'} \rightarrow \mathcal{I}_h,
\end{equation}
which maps any node of height $h'$ smaller than $h$ to their ancestor node of height $h$. When $h' = h$, the ancestor function remains the identity function, i.e., $\Gamma^{h}_{\uparrow}(z_t) = z_t$ if $z_t \in \mathcal{I}_h$.

Moreover, we also define an $h$-dependent \textit{offspring function} 
\begin{equation}
    \Gamma^{h}_{\downarrow}(z_t): 
    \bigcup_{h' \ge  h}\mathcal{I}_{h'} \rightarrow 2^{\mathcal{I}_h},
\end{equation}
where $2^{\mathcal{I}_h}$ is the power set of $\mathcal{I}_h$. Here, $\Gamma^{h}_{\downarrow}(z_t)$ maps any node of height $h'$ greater than $h$ to its descendents of height $h$. When $h' = h$, the offspring function maps the node to the set of its siblings and itself, i.e., for $z_t \in \mathcal{I}_h, h < H$,
$
\Gamma^{h}_{\downarrow}(z_t) = \Gamma^{h}_{\downarrow}\left(\Gamma^{h+1}_{\uparrow}(z_t)\right).
$

The restriction of $z_t$ in $\mathcal{I}_{h}\cup \mathcal{I}_{h+1}$ motivates a novel modeling approach by breaking the entire process $z_t$ into a sequence of \emph{in-level} processes $z_t^{h}$ defined on $t \in [t_{h}, t_{h+1}]$ for $h\in \{0,1,...,H-1\}$. We note that $z_t^h$ is just an alias of $z_t$ on $t \in [t_{h}, t_{h+1}]$ to emphasize that $z_t$ is evolving on the particular level of the token tree $\mathcal{T}_{\mathrm{token}}$. We omit the superscript $h$ when the context is clear without confusion. After defining in-level processes $\{z^h_t\}^{H-1}_{h=0}$, we adopt a CTMC framework to model \emph{each} of these in-level processes $z_t^h$. 

To facilitate definitions for $z_t$, we define the $\mathbf{OneHot}(N): \mathcal{N} \rightarrow \{0,1\}^{|\mathcal{N}|},$ 
that generates the one-hot encoding vector for any given node $N$ in the token tree $\mathcal{T}_{\mathrm{token}}$. In the sequel, $\mathbf{OneHot}(N)$ should be interpreted as a probability mass vector with all the mass concentrated on $N$. 

\subsection{Forward Process}
We define the forward process through the lens of in-level processes, aligned with the tree structure. For a token $x$, the marginal distribution of the intermediate state $z_t^h$ is defined on $t\in [t_h, t_{h+1}]$, for all $h\in \{0,1,...,H-1\}$,
\begin{align*}
    q_{t|t_h}(z_t|\Gamma^{h}_{\uparrow}(x)) 
    = Cat(z_t|\alpha_t^h &\mathbf{OneHot}\left(\Gamma^{h}_{\uparrow}(x)\right)
    +(1-\alpha_t^h)\mathbf{OneHot}\left(\Gamma^{h+1}_{\uparrow}(x)\right),
\end{align*}
where the in-level schedule $\alpha_t^h$ defined on $[t_h, t_{h+1}]$ is decreasing for all $h$, with $\alpha_{t_h}^h = 1$ and $\alpha_{t_{h+1}}^h = 0$.


By construction, $z_{t_h} = \Gamma^{h}_{\uparrow}(x)$ with probability $1$. Therefore, conditioning on $x$'s ancestor $\{z_{t_h} = \Gamma^{h}_{\uparrow}(x)\}$ is equivalent to conditioning on the initial token $\{z_0 = x\}$:

\begin{lemma}\label{lemma1}
The forward process $q_{t|0}(z_t|x)$ on $t\in [t_h,t_{h+1}]$ is equivalent to first mapping $x$ to its ancestor node at height $h$ and then diffusing within that level:
\begin{align*}
q_{t\mid 0}(z_t\mid x)
&= q_{t_h\mid 0}\!\left(\Gamma^{h}_{\uparrow}(x)\mid x\right)\,
   q_{t\mid t_h}\!\left(z_t\mid \Gamma^{h}_{\uparrow}(x)\right) \notag= q_{t\mid t_h}\!\left(z_t\mid \Gamma^{h}_{\uparrow}(x)\right).
\end{align*}
\end{lemma}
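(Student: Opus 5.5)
The plan is to apply the Chapman--Kolmogorov equation for the forward Markov process at the intermediate time $t_h$, and then show that the marginal at $t_h$ is a point mass on $\Gamma^{h}_{\uparrow}(x)$. For $t\in[t_h,t_{h+1}]$, the Markov property of $\{z_t\}$ gives
\begin{align*}
q_{t\mid 0}(z_t\mid x) = \sum_{N\in\mathcal{N}} q_{t_h\mid 0}(N\mid x)\, q_{t\mid t_h}(z_t\mid N).
\end{align*}
Once $q_{t_h\mid 0}(\cdot\mid x)=\mathbf{OneHot}(\Gamma^{h}_{\uparrow}(x))$ is established, only the term $N=\Gamma^{h}_{\uparrow}(x)$ survives, with weight $q_{t_h\mid 0}(\Gamma^{h}_{\uparrow}(x)\mid x)=1$. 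This yields both equalities in Lemma~\ref{lemma1} at once.

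The core step is an induction on $h$ proving $z_{t_h}=\Gamma^{h}_{\uparrow}(x)$ almost surely. The base case $h=0$ holds because $z_{t_0}=z_0=x=\Gamma^{0}_{\uparrow}(x)$, since the ancestor function is the identity at the same height. For the inductive step, assume $z_{t_h}=\Gamma^{h}_{\uparrow}(x)$ a.s. The in-level marginal evaluated at $t=t_{h+1}$, where $\alpha^h_{t_{h+1}}=0$, equals $\mathbf{OneHot}(\Gamma^{h+1}_{\uparrow}(x))$. Combining this with Chapman--Kolmogorov over $[0,t_h]$ and $[t_h,t_{h+1}]$ gives $z_{t_{h+1}}=\Gamma^{h+1}_{\uparrow}(x)$ a.s. Along the way I would check that the in-level kernels agree at the boundary times: at $t=t_h$ the level-$h$ kernel is $\mathbf{OneHot}(\Gamma^{h}_{\uparrow}(x))$ because $\alpha^h_{t_h}=1$, and this matches the endpoint of the level-$(h-1)$ kernel. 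Hence the piecewise definition gives a consistent process.

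The main obstacle is the status of the Markov property. The paper defines the forward process only through the conditional marginals $q_{t\mid t_h}$, so the Chapman--Kolmogorov decomposition is really an assumption that the concatenated in-level CTMCs form a single Markov process. I would state this explicitly: the full process is the concatenation of the in-level CTMCs, each started from the terminal state of the previous one. This makes the decomposition hold by construction.

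Alternatively, the degeneracy argument alone suffices without appealing to Markovianity. Because $z_{t_h}$ is a.s.\ a deterministic function of $x$, conditioning on $\{z_0=x\}$ and conditioning on $\{z_{t_h}=\Gamma^{h}_{\uparrow}(x)\}$ define the same law for the level-$h$ segment. This is exactly the equivalence stated before the lemma.
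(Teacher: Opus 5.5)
Your argument is correct and is essentially the paper's own: the paper just asserts that $z_{t_h}=\Gamma^{h}_{\uparrow}(x)$ with probability one ``by construction'' and reads the lemma as a Chapman--Kolmogorov decomposition at $t_h$ with a single surviving term, and you make this rigorous by inducting on $h$ using $\alpha^h_{t_h}=1$ and $\alpha^h_{t_{h+1}}=0$, and by stating the Markov (concatenation) assumption explicitly. One caution about your closing ``alternative'': a.s.\ determinism of $z_{t_h}$ only gives $\{z_0=x\}\subseteq\{z_{t_h}=\Gamma^{h}_{\uparrow}(x)\}$, not equality, since sibling leaves share that ancestor, so dropping the conditioning on $z_0$ still needs either the Markov property or the fact that the level-$h$ kernel depends on $x$ only through $\Gamma^{h}_{\uparrow}(x)$ (because $\Gamma^{h+1}_{\uparrow}(x)=\Gamma^{h+1}_{\uparrow}(\Gamma^{h}_{\uparrow}(x))$).
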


\textbf{Generator and cumulative transition matrix.} 

Each in-level process $z_t^h$ follows a GIDD-style interpolation that starts from the initial state $\Gamma^{h}_{\uparrow}(x)$ and mixes toward an absorbing state $\pi_t(x) = \Gamma^{h+1}_{\uparrow}(x)$, with both endpoints depending on $x$. Consequently, each in-level process $z_t^h$ admits a CTMC formulation. We next present their generator and cumulative transition matrices for in-level processes.

\begin{proposition}\label{prop:cum matrix}
The in-level time-inhomogeneous forward transition rate matrix $Q_t$ on $t\in [t_h,t_{h+1})$, and the in-level time-inhomogeneous cumulative conditional transition matrix $P_{t\mid s}$ for $t_h \le s \le t \leq t_{h+1}$ are
\[
Q_t=
\begin{bmatrix}
0 & 0 & 0 & 0\\
0 & \dfrac{{\alpha_t^h}'}{\alpha_t^h}\,I_{|\mathcal{I}_h|} & -\dfrac{{\alpha_t^h}'}{\alpha_t^h}\,\mathbf{\Gamma_{\uparrow}^{(h,h+1)}} & 0\\
0 & 0 & 0 & 0
\end{bmatrix},\ \ 
P_{t\mid s}
=
\begin{bmatrix}
I & 0 & 0 & 0\\[4pt]
0 & \dfrac{\alpha_t^h}{\alpha_s^h}\, I_{|\mathcal{I}_{h}|} 
  & 0
  & 0\\[8pt]
0 & \Bigl(1-\dfrac{\alpha_t^h}{\alpha_s^h}\Bigr)\,\bigl(\mathbf{\Gamma_{\uparrow}^{(h,h+1)}}\bigr)^{\top} 
  & I_{|\mathcal{I}_{h+1}|} 
  & 0\\[4pt]
0 & 0 & 0 & I
\end{bmatrix},
\]
where $\mathbf{\Gamma_{\uparrow}^{(h,h+1)}}\in \mathbb{R}^{|\mathcal{I}_{h+1}|\times|\mathcal{I}_{h}|}$ is a matrix that maps a probability mass distribution of nodes at height $h$ to its corresponding probability mass distribution of nodes at height $h+1$, according to the tree structure.

The general cross-level \(P_{t\mid s}\) then follows from the Markov property: for $t_i\leq s\leq t_{i+1}\leq t_j\leq t\leq t_{j+1}$,
\begin{align*}
P_{t\mid s}
&=
P_{t_{i+1}\mid s}\;
P_{t_{i+2}\mid t_{i+1}}\;
\cdots\;
P_{t_j\mid t_{j-1}}\;
P_{t\mid t_j}.
\end{align*}
\end{proposition}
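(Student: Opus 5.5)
We verify the claimed $P_{t\mid s}$ directly rather than derive it: it suffices to show that the displayed matrix satisfies $P_{s\mid s}=I$ and the Kolmogorov forward equation $\partial_t P_{t\mid s}=Q_t^\top P_{t\mid s}$ with the displayed $Q_t$. Uniqueness of solutions to this linear ODE (for $\alpha^h_t$ continuously differentiable and positive on $[t_h,t_{h+1})$) then identifies the matrices. We order the coordinates of $\mathbb{R}^{|\mathcal{N}|}$ in blocks: nodes of height below $h$, then $\mathcal{I}_h$, then $\mathcal{I}_{h+1}$, then nodes of height above $h+1$. In this ordering, $Q_t(\tilde z,z)$ is indexed by rows $\tilde z$ (source) and columns $z$ (target). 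By the confinement assumption, states outside $\mathcal{I}_h\cup\mathcal{I}_{h+1}$ carry no rate during $[t_h,t_{h+1})$, so those blocks of $Q_t$ and of $P_{t\mid s}$ are zero and identity, respectively. States in $\mathcal{I}_{h+1}$ are absorbing within the level, so their row of $Q_t$ vanishes.

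\textbf{Step 1: the generator.} For $\tilde z\in\mathcal{I}_h$, the forward marginal in Lemma~\ref{lemma1} shows that the only possible jump is $\tilde z\to\Gamma^{h+1}_{\uparrow}(\tilde z)$. The survival probability is $\alpha^h_t$, so the hazard rate is $-{\alpha^h_t}'/\alpha^h_t\ge 0$. This gives off-diagonal entries $-\frac{{\alpha_t^h}'}{\alpha_t^h}$ at the parent column, encoded by the matrix $\mathbf{\Gamma_{\uparrow}^{(h,h+1)}}$, and diagonal entries $\frac{{\alpha_t^h}'}{\alpha_t^h}$. Each row then sums to zero, since every column of $\mathbf{\Gamma_{\uparrow}^{(h,h+1)}}$ is a one-hot vector. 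This matches the stated $Q_t$. One must be careful about the orientation convention: as written, the block is $|\mathcal{I}_h|\times|\mathcal{I}_{h+1}|$, so it should be read as the transpose of the $|\mathcal{I}_{h+1}|\times|\mathcal{I}_h|$ map. We will state this explicitly.

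\textbf{Step 2: solve the forward equation.} Write $a(t)=\alpha^h_t/\alpha^h_s$, so that $a'(t)=\frac{{\alpha_t^h}'}{\alpha_t^h}\,a(t)$. We compute $Q_t^\top P_{t\mid s}$ blockwise. Its $(\mathcal{I}_h,\mathcal{I}_h)$ block is $\frac{{\alpha_t^h}'}{\alpha_t^h}\,a(t)\,I$, which equals $\partial_t$ of $a(t)I$. Its $(\mathcal{I}_{h+1},\mathcal{I}_h)$ block is $-\frac{{\alpha_t^h}'}{\alpha_t^h}\,a(t)\,(\mathbf{\Gamma_{\uparrow}^{(h,h+1)}})^{\top}$, which equals $\partial_t$ of $(1-a(t))(\mathbf{\Gamma_{\uparrow}^{(h,h+1)}})^\top$. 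Every other block of $Q_t^\top P_{t\mid s}$ vanishes, because the $\mathcal{I}_{h+1}$ row of $Q_t$ and the outer rows are zero. This agrees with the constant (identity or zero) remaining blocks of $P$. At $t=s$ we have $a=1$, which gives $P_{s\mid s}=I$. As a consistency check, each column of $P_{t\mid s}$ sums to one, and applying it to $\mathbf{OneHot}(\Gamma^h_\uparrow(x))$ with $s=t_h$ reproduces the marginal of Lemma~\ref{lemma1}.

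\textbf{Step 3: the cross-level formula.} The process is a time-inhomogeneous Markov chain on $\mathcal{N}$, so Chapman--Kolmogorov gives that $P_{t\mid s}$ equals $P_{t\mid u}P_{u\mid s}$ in the column convention $q_{t\mid s}=P_{t\mid s}\mathbf{z}_s$. Iterating at the thresholds $t_{i+1},\dots,t_j$ factors $P_{t\mid s}$ into in-level pieces, each of which is given by Step 2. In the column convention the product order is $P_{t\mid t_j}\cdots P_{t_{i+1}\mid s}$, which is the reverse of the displayed order. The displayed order corresponds to the row-stochastic (transposed) convention. We expect this bookkeeping of transposes and product orders to be the main obstacle, more than any of the analysis. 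The plan is to fix the column convention throughout and remark that the displayed product is its transpose-equivalent form.
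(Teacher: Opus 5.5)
Your proposal is correct, but it reaches the in-level formula by a different route than the paper.

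\textbf{How the two routes differ.} The paper takes $Q_t$ as given and \emph{derives} $P_{t\mid s}$ from the jump-process picture. The holding probability in $\Gamma^{h}_{\uparrow}(x)$ is $\exp\bigl(\int_s^t {\alpha_u^h}'/\alpha_u^h\,du\bigr)=\alpha_t^h/\alpha_s^h$. The probability of moving to the parent comes from a first-jump decomposition, $\int_s^t (\alpha_u^h/\alpha_s^h)\bigl(-{\alpha_u^h}'/\alpha_u^h\bigr)\,du=1-\alpha_t^h/\alpha_s^h$, with the parent absorbing. You instead take the displayed $P_{t\mid s}$ as an ansatz and \emph{verify} it against the Kolmogorov forward equation, using uniqueness for linear ODEs. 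You also justify $Q_t$ from the forward marginal via the hazard rate, which the paper never does.

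\textbf{What each route buys.} Yours is shorter and purely algebraic, but it needs the answer in advance and a uniqueness theorem. Because ${\alpha_t^h}'/\alpha_t^h$ blows up at $t_{h+1}$, it also needs a continuity or limit step to reach the closed endpoint $t=t_{h+1}$. The paper's integral handles that endpoint directly, since the integrand $-{\alpha_u^h}'/\alpha_s^h$ is integrable.

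\textbf{Cross-level composition.} Your bookkeeping remarks go beyond the paper's one-line appeal to the Markov property, and they are correct. Under $q_{t\mid s}=P_{t\mid s}\mathbf{z}_s$, Chapman--Kolmogorov gives $P_{t\mid s}=P_{t\mid u}P_{u\mid s}$. The per-level matrices do not commute. For a height-$h$ node, $P_{t_{h+2}\mid t_{h+1}}P_{t_{h+1}\mid t_h}$ sends it to its grandparent, while the reversed product sends it only to its parent. So the displayed product order is genuinely wrong in the paper's own column convention, and your fix is needed.

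\textbf{One inconsistency to fix.} In Step 1 you read the $Q_t$ block as $\bigl(\mathbf{\Gamma_{\uparrow}^{(h,h+1)}}\bigr)^{\top}$. With that reading, the matching block of $P_{t\mid s}$ in Step 2 must be $\bigl(1-\alpha_t^h/\alpha_s^h\bigr)\,\mathbf{\Gamma_{\uparrow}^{(h,h+1)}}$, the $|\mathcal{I}_{h+1}|\times|\mathcal{I}_h|$ map itself, not its transpose. As written, Step 2 silently reverts to the statement's literal blocks. The statement transposes both blocks relative to its stated dimension of $\mathbf{\Gamma_{\uparrow}^{(h,h+1)}}$. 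Fix one orientation and use it in both steps.
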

\begin{proof}[Proof of Proposition~\ref{prop:cum matrix}]
See Appendix~\ref{Appendix:derivation of cum matrix}. 
\end{proof}



We note that, while each in-level process admits a CTMC formulation, the entire process across levels is \emph{not} a CTMC, because its forward rate matrix develops singular points at every level threshold \citep{norris1998markov, campbell2022continuous}. As a result, the standard time-inhomogeneous CTMC interpretation—and thus the related CTMC properties—only holds for each in-level process $z_t^h$ instead of the entire process $z_t$.

\subsection{Reverse Process}
In parallel with the forward construction, we model the reverse process of each in-level process $z_t^h$. For $t_{h-1}<s<t<t_{h}$, we adopt the standard Bayesian parameterization used in discrete diffusion models \citep{austin2021structured, rutte2025generalized} but, leveraging Lemma~\ref{lemma1}, condition on the predicted children distribution of $z_t$, $\mathbf{x}^{h-1}_{\theta}(z_t,t)$, rather than directly predicting the leaf token:
\begin{align}\label{eq:reverse}
p_{\theta}(z_s \mid z_t)
&= q_{t\mid s}(z_t \mid z_s)\,\frac{q_s(z_s \mid \mathbf{x}^{h-1}_{\theta})}{q_t(z_t \mid \mathbf{x}^{h-1}_{\theta})},
\end{align}



where $q_t(\cdot \mid \mathbf{x}_{\theta}^{h-1}) :=\sum_{j:\,x_j \in \Gamma^{h-1}_{\downarrow}(z_t)} \mathbf{x}_{\theta}^{h-1}[j]\; q_t(\cdot \mid x_j)$, $\mathbf{x}^{h-1}_{\theta}(z_t,t)$ is shortened to $\mathbf{x}^{h-1}_{\theta}$ whenever it does not cause confusion, and $\mathbf{x}^{h-1}_{\theta}[j]$ is the predicted probability of $z_t$'s $j$-th child. The general cross-level reverse conditional follows by first factorizing across levels via the Markov property, and then applying the in-level reverse kernel Eq.~\ref{eq:reverse} (See Appendix~\ref{Appendix:general-reverse-kernel}).

\textbf{Backward Rate Matrix}. Since our in-level forward process is an extended instance of GIDD, the corresponding in-level backward rate matrix admits the same closed-form relationship to the forward rate matrix as in GIDD. The only difference is that, we parameterize the reverse process using Eq~\eqref{eq:reverse}: 
\begin{align*}
\hat{Q}_t^{\theta}(z_t,z_s)
&= Q_t(z_s,z_t)\,
  \frac{q_t(z_s \mid \mathbf{x}^{h-1}_{\theta})}{q_t(z_t \mid \mathbf{x}^{h-1}_{\theta})}- \delta_{z_s,z_t}
  \sum_{z'} Q_t(z',z_t)\,
  \frac{q_t(z' \mid \mathbf{x}^{h-1}_{\theta})}{q_t(z_t \mid \mathbf{x}^{h-1}_{\theta})},
\end{align*}
where $\delta_{z_s,z_t}$ is the Kronecker delta function, which equals $1$ if $z_s=z_t$, and $0$ otherwise. For clarity, we omit $\theta$ on $\hat{Q}_t^{\theta}(z_t,z_s)$ whenever the context is clear.

\subsection{Training ELBO}
Across levels, our process forms a Markov chain whose transition rate is singular at every level threshold. Within each level, nonetheless, the process is a well-defined time-inhomogeneous CTMC and inherits standard CTMC properties. We therefore decompose the full trajectory  $z_t$ into in-level processes $z_t^h$ and derive a continuous-time ELBO for each in-level CTMC. Subsequently, by the Markov property, the cross-level transition factors into a product of in-level transitions, yielding an overall training objective given by the summation of the ELBOs over all levels.

\begin{lemma}[Adapted proposition H.4 in \cite{rutte2025generalized}]
\label{lemma:giddelbo}
Let $x^{h} = \Gamma^{h}_{\uparrow}(x)$ and $x^{h+1} = \Gamma^{h+1}_{\uparrow}(x)$. Consider the CTMC diffusion process on the interval $t\in [t_{h}, t_{h+1}]$ with marginal $q_t(z_t \mid x^{h})$, forward rate $Q_t(z_s, z_t)$, backward rate $\hat{Q}_t(z_t, z_s)$, and the reverse process defined in Eq~\ref{eq:reverse}. Then the continuous-time ELBO for each in-level process $z_t^{h}$ satisfies
\begin{align*}
&\log p(z_{t_{h}}=x^{h}|z_{t_{h+1}}=x^{h+1})\geq ELBO(h):= \\
\mathbb{E}_{t, z_t} \Bigg[
\sum_{z_s \neq z_t} Q_t(z_s, z_t)
&\frac{q_t(z_s \mid x^{h})}{q_t(z_t \mid x^{h})} \cdot
\log \frac{q_t(z_s \mid \mathbf{x}^{h}_{\theta}) q_t(z_t \mid x^{h})}{q_t(z_t \mid \mathbf{x}^{h}_{\theta}) q_t(z_s \mid x^{h})}- \sum_{z'} Q_t(z', z_t) \frac{q_t(z' \mid \mathbf{x}^{h}_{\theta})}{q_t(z_t \mid \mathbf{x}^{h}_{\theta})}
\Bigg] + C,
\label{eq:ctmc_elbo}
\end{align*}
where $t \sim \mathcal{U}(t_{h},t_{h+1})$ and 
\begin{align*}
C = \mathbb{E}_{q_{t_{h}}(z_{t_{h}} \mid x^{h})}[\log p(x^{h} \mid z_{t_{h}})] - D_{\mathrm{KL}}(q_{t_{h+1}}(z_{t_{h+1}} \mid x^{h}) \Vert p_{t_{h+1}}(x^{h+1})).
\end{align*}
\end{lemma}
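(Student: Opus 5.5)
The plan is to reduce the statement to the standard continuous-time ELBO for a time-inhomogeneous CTMC, as in GIDD's Proposition H.4, applied to the single in-level process $z_t^h$ on $[t_h,t_{h+1}]$. Proposition~\ref{prop:cum matrix} shows that on this interval the forward rate $Q_t$ is non-singular and well defined; the singularities only occur at level thresholds. So every CTMC tool (Kolmogorov equations, Girsanov-type path likelihood ratios) applies inside the level.

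\textbf{Step 1: discrete-time decomposition.} Fix a partition $t_h=s_0<s_1<\dots<s_K=t_{h+1}$. By the Markov property, Lemma~\ref{lemma1}, and Jensen's inequality applied to the path measure,
\[
\log p(x^h\mid z_{t_{h+1}}) \ge \mathbb{E}_q\!\left[\log p(x^h\mid z_{s_0})\right] - D_{\mathrm{KL}}\!\left(q_{t_{h+1}}(\cdot\mid x^h)\,\Vert\, p_{t_{h+1}}\right) - \sum_{k}\mathbb{E}_q\, D_{\mathrm{KL}}\!\left(q(z_{s_{k-1}}\mid z_{s_k},x^h)\,\Vert\, p_\theta(z_{s_{k-1}}\mid z_{s_k})\right).
\]
The first two terms are exactly the constant $C$.

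\textbf{Step 2: the continuum limit.} Write both reverse kernels in rate form:
\[
q(z_s\mid z_t,x^h)=\delta_{z_s z_t}+\hat Q^{q}_t(z_t,z_s)\Delta t + o(\Delta t),\qquad \hat Q^{q}_t(z_t,z_s)=Q_t(z_s,z_t)\frac{q_t(z_s\mid x^h)}{q_t(z_t\mid x^h)},
\]
and similarly $p_\theta$ with $\hat Q_t^\theta$, using Eq.~\eqref{eq:reverse}. Expanding each KL to first order in $\Delta t$ gives
\[
\Delta t\sum_{z_s\ne z_t}\left[\hat Q^q\log\frac{\hat Q^q}{\hat Q^\theta}-\hat Q^q+\hat Q^\theta\right].
\]
Summing over $k$ and letting $\Delta t\to 0$ yields a Riemann integral over $[t_h,t_{h+1}]$. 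This integral equals $(t_{h+1}-t_h)\,\mathbb{E}_{t\sim\mathcal{U}}$, and the length factor is absorbed into the expectation convention.

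\textbf{Step 3: algebraic simplification.} Substitute the explicit rates. In the ratio $\hat Q^q/\hat Q^\theta$ the factor $Q_t(z_s,z_t)$ cancels, leaving the log term shown in the statement. The $\hat Q^\theta$ terms summed over $z_s\ne z_t$, together with the diagonal, give $-\sum_{z'}Q_t(z',z_t)\,q_t(z'\mid\mathbf{x}^h_\theta)/q_t(z_t\mid\mathbf{x}^h_\theta)$. The remaining $-\hat Q^q$ term is independent of $\theta$. It vanishes in expectation, or is absorbed, by the GIDD identity $\sum_{z_t}\sum_{z_s}Q_t(z_s,z_t)\,q_t(z_s\mid x^h)=0$, since the columns of $Q_t$ sum to zero. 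After the global sign flip this gives the stated $ELBO(h)$.

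\textbf{Main obstacle.} The delicate point is justifying the limit and the boundary terms. Near $t_{h+1}$ we have $\alpha_t^h\to 0$, so ${\alpha_t^{h}}'/\alpha_t^h$ blows up, and $q_t(z_t\mid\mathbf{x}^h_\theta)$ may vanish for unreachable states. I would first handle this by working on $[t_h,t_{h+1}-\epsilon]$. There the rates are bounded, so the $o(\Delta t)$ terms are uniform and dominated convergence applies. I would then let $\epsilon\to 0$, using that the integrand is nonnegative, so monotone convergence applies. I would also restrict all sums to the support $\Gamma^h_\downarrow(z_t)\cup\{\Gamma^{h+1}_\uparrow\}$, where the ratios are finite. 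Beyond this, the argument mirrors GIDD's proof verbatim, with $\pi_t$ replaced by $\mathbf{OneHot}(x^{h+1})$.
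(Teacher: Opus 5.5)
Your Steps 1--3 are the standard Campbell/GIDD derivation of the continuous-time ELBO, and that is all the paper does. It gives no separate proof of this lemma. It imports GIDD's Proposition H.4 in the form with $\hat Q_t(z_t,z_t)-Q_t(z_t,z_t)$ (first line of the proof of Theorem~\ref{theo:in-level ELBO}). Your cancellation of $Q_t(z_s,z_t)$ inside the logarithm, together with the zero-mean identity, is exactly the bookkeeping that turns that form into the displayed one. Two small corrections. First, the identity you need is $\sum_{z_t}Q_t(z_s,z_t)=0$, a sum over the target state; in the paper's indexing this is a row sum, not a column sum. Second, the factor $t_{h+1}-t_h$ need not be ``absorbed'': the $z_t$-averaged bracket is $\le 0$ and $t_{h+1}-t_h\le 1$, so the statement holds literally as a weaker bound.

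The step that fails as written is your regularity argument, which has the endpoints reversed. What you expand in Step 2 are the \emph{reverse} rates, and these are singular at the lower end. Both $\hat Q^q_t(x^{h+1},x^h)=-{\alpha_t^h}'/(1-\alpha_t^h)$ and $\hat Q^\theta_t(x^{h+1},\cdot)$ diverge as $t\downarrow t_h$. For a linear schedule they behave like $1/(t-t_h)$, which is not even integrable along a path. On the first cell of the partition, the posterior moves from $x^{h+1}$ to $x^h$ with probability $(\alpha^h_{s}-\alpha^h_{t})/(1-\alpha^h_{t})$, which is $O(1)$, not $O(\Delta t)$. So the rates are \emph{not} bounded on $[t_h,t_{h+1}-\epsilon]$, and the uniform $o(\Delta t)$ plus dominated-convergence step breaks down. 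Near $t_{h+1}$, by contrast, the reverse rates stay bounded. The forward blow-up $-{\alpha_t^h}'/\alpha_t^h$ cancels inside $\hat Q^q_t$ and reappears only in the $\theta$-free term at $z_t=x^h$, where it is multiplied by $q_t(x^h\mid x^h)=\alpha_t^h$.

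To repair this, truncate at $t_h+\epsilon$ as well and control each piece directly:
\begin{itemize}
\item The first-step KL equals $(1-\alpha^h_{t_h+\epsilon})$ times $-\log$ of an average of $\mathbf{x}^h_\theta[x^h]$, so it vanishes for bounded logits.
\item The $z_t$-averaged KL rate is $-{\alpha_t^h}'\bigl(-\log\mathbf{x}^h_\theta[x^h]\bigr)$. This is integrable because the factor $1/(1-\alpha_t^h)$ is cancelled by $q_t(x^{h+1}\mid x^h)=1-\alpha_t^h$.
\item The terminal KL at $t_{h+1}-\epsilon$ is $O(\alpha^h_{t_{h+1}-\epsilon})$.
\end{itemize}
Simpler still, you can avoid the limit altogether. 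In reverse time the in-level model leaves $x^{h+1}$ exactly once; its jump time has density $-{\alpha_t^h}'$ on $[t_h,t_{h+1}]$, and it lands on $x^h$ with probability $\mathbf{x}^h_\theta(x^{h+1},t)[x^h]$. Hence $p_\theta(x^h\mid x^{h+1})=\int_{t_h}^{t_{h+1}}(-{\alpha_t^h}')\,\mathbf{x}^h_\theta(x^{h+1},t)[x^h]\,dt$ exactly, and a single Jensen step gives the bound directly in the closed form of Theorem~\ref{theo:in-level ELBO}.
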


\begin{theorem}[Closed Form In-Level CT-ELBO of TDLM]
\label{theo:in-level ELBO}
Following the notation in Lemma~\ref{lemma:giddelbo}, the continuous-time ELBO for in-level process $z_{t}^{h}$ admits a closed form:
\begin{align*}
\mathrm{ELBO(h)}
&= \mathbb{E}_{t,z_t}\!\Bigg[
\mathbf{1}\left\{ z_t = x^{h+1} \right\} \left(-\frac{{\alpha_t^h}'}{1-\alpha_t^h}\right)
\log p_{\theta}^{h}\!\left(x^{h}\right)\Bigg]
,
\end{align*}
where $t \sim \mathcal{U}[t_{h},t_{h+1}]$ and $p_{\theta}^{h}$ is the predicted probability mass function over ${z_t}'s$ children nodes. 
\end{theorem}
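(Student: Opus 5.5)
We plan to start from the general expression in Lemma~\ref{lemma:giddelbo} and specialize every ingredient using the explicit rate matrix $Q_t$ of Proposition~\ref{prop:cum matrix} and the two-point marginal $q_t(\cdot\mid x^h)=\alpha_t^h\,\mathbf{OneHot}(x^h)+(1-\alpha_t^h)\,\mathbf{OneHot}(x^{h+1})$. Two consequences drive the computation. First, $z_t$ is supported on $\{x^h,x^{h+1}\}$. Second, the only nonzero off-diagonal forward rates are $Q_t(z,\Gamma^{h+1}_\uparrow(z))=-{\alpha_t^h}'/\alpha_t^h$ for $z\in\mathcal{I}_h$. We will also use the model marginal $q_t(\cdot\mid\mathbf{x}^h_\theta)=\sum_j \mathbf{x}^h_\theta[j]\,q_t(\cdot\mid x_j)$, where $x_j$ ranges over the children of the current parent. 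This marginal puts mass $\alpha_t^h\,\mathbf{x}^h_\theta[j]$ on child $x_j$ and mass $1-\alpha_t^h$ on the parent $x^{h+1}$, because all children share that parent.

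We then split the expectation over $z_t$ into the two cases.

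In the first case, $z_t=x^h$, which has probability $\alpha_t^h$. The sum over $z_s\neq z_t$ in the first term needs $Q_t(z_s,x^h)\neq 0$, but no state jumps into a height-$h$ node, so this term vanishes. The second term is $\sum_{z'}Q_t(z',x^h)\,q_t(z'\mid\mathbf{x}_\theta)/q_t(x^h\mid\mathbf{x}_\theta)$, and only the diagonal entry $Q_t(x^h,x^h)={\alpha_t^h}'/\alpha_t^h$ survives. It therefore contributes a $\theta$-independent quantity, and we will either absorb it into the constant or note that it cancels against the diagonal correction (the paper's statement appears to drop it). This is the point to check carefully.

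In the second case, $z_t=x^{h+1}$, which has probability $1-\alpha_t^h$. The first sum runs over $z_s$ with $Q_t(z_s,x^{h+1})\neq 0$, i.e.\ over the children of $x^{h+1}$. Among these, only $z_s=x^h$ has $q_t(z_s\mid x^h)\neq 0$, giving the weight
\[
\left(-\frac{{\alpha_t^h}'}{\alpha_t^h}\right)\frac{\alpha_t^h}{1-\alpha_t^h}=-\frac{{\alpha_t^h}'}{1-\alpha_t^h}.
\]
Inside the log, the ratio becomes $\alpha_t^h\,\mathbf{x}_\theta[x^h](1-\alpha_t^h)/\bigl((1-\alpha_t^h)\alpha_t^h\bigr)=p^h_\theta(x^h)$, which yields exactly the stated summand. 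For the remaining term, $\sum_{z'}Q_t(z',x^{h+1})\,q_t(z'\mid\mathbf{x}_\theta)$, we sum over the children and the diagonal. The diagonal contributes $0$, since $x^{h+1}$ is absorbing within the level. The children contribute $-\frac{{\alpha_t^h}'}{\alpha_t^h}\,\alpha_t^h\sum_j\mathbf{x}_\theta[j]=-{\alpha_t^h}'$, because the predicted child probabilities sum to one. So this term is also independent of $\theta$.

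The main obstacle is this bookkeeping of the $\theta$-independent terms, together with the boundary constant $C$. With $\alpha^h_{t_h}=1$ and $\alpha^h_{t_{h+1}}=0$, the reconstruction term and the prior KL in $C$ are degenerate: $z_{t_h}=x^h$ almost surely, and the prior at $t_{h+1}$ is the point mass on $x^{h+1}$. We will argue that they vanish. We will then show that the leftover $\theta$-free pieces from both cases are zero in expectation or are constants dropped in the closed form. Collecting the surviving piece, which carries the indicator $\mathbf{1}\{z_t=x^{h+1}\}$, gives the theorem.
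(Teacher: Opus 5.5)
Your proposal follows the paper's proof essentially step for step. It uses the same case split on $z_t\in\{x^h,x^{h+1}\}$, the same identification of the nonzero entries of $Q_t$, and the same reduction of the log-ratio to $\log p^h_\theta(x^h)$ with weight $-{\alpha_t^h}'/(1-\alpha_t^h)$; your argument that $C$ vanishes at the degenerate endpoints covers a point the paper leaves implicit.

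The step you flag closes by a one-line computation, not by absorbing anything into a constant. This matters because the theorem asserts an equality, not an equality up to a constant. After the Lemma's leading minus sign, the $\theta$-free piece is $-{\alpha_t^h}'/\alpha_t^h$ on $\{z_t=x^h\}$ and $+{\alpha_t^h}'/(1-\alpha_t^h)$ on $\{z_t=x^{h+1}\}$. Weighting by the probabilities $\alpha_t^h$ and $1-\alpha_t^h$ gives $-{\alpha_t^h}'+{\alpha_t^h}'=0$ for every $t$, which is exactly the paper's cross-case cancellation.
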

\begin{proof}[Proof of Theorem~\ref{theo:in-level ELBO}]
    See Appendix~\ref{appendix:proof in-level ELBO}.
\end{proof}

Theorem~\ref{theo:in-level ELBO} implies that the training objective reduces to predicting the ground-truth child node from the current state whenever the state is not yet transitioned within the level. Consequently, our framework classifies children of the current node instead of the entire set of vocabulary, substantially reduces memory and computation, and enables promising architectural and algorithmic opportunities discussed in Sections~\ref{TDLMDiscussion} and Section~\ref{sec:discussion}. A closed-form cross-level ELBO for token $x$ follows from the Markov property together with the tree structure.

\begin{corollary}[Closed-form cross-level ELBO of TDLM] 
\label{theorem:cross-level-elbo}
The cross-level ELBO of TDLM equals the summation of all in-level ELBOs, i.e.,
\begin{align*}
\log p(x)  &= \log\sum_{z_{t_1},...,z_{t_H}}\prod_{h=1}^H p(z_{t_{h-1}}|z_{t_h}) p(z_{t_H})\\& = \log\prod_{i=1}^Hp(z_{t_{h-1}}=\Gamma^{h-1}_{\uparrow}(x)|z_{t_h} = \Gamma^{h}_{\uparrow}(x))\geq\sum_{h=1}^H ELBO(h)
\end{align*}
\end{corollary}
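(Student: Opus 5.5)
The argument has three steps: write the marginal likelihood of the leaf token as a sum over paths through the level thresholds, collapse that sum to a single path using the tree structure, and then apply Lemma~\ref{lemma:giddelbo} level by level.

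\textbf{Step 1: path factorization.} Because the reverse process is Markov across the level thresholds $t_0<t_1<\dots<t_H$, the marginal of the leaf state factorizes as
\[
p(x)=\sum_{z_{t_1},\dots,z_{t_H}} p(z_{t_H})\prod_{h=1}^H p(z_{t_{h-1}}\mid z_{t_h}), \qquad z_{t_0}=x.
\]
This is the Chapman--Kolmogorov composition of the cross-level reverse kernel described in Appendix~\ref{Appendix:general-reverse-kernel}. It mirrors the forward product formula for $P_{t\mid s}$ in Proposition~\ref{prop:cum matrix}.

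\textbf{Step 2: the sum collapses to one path.} By Lemma~\ref{lemma1} and the form of $P_{t\mid s}$, the forward process started at $x$ satisfies $z_{t_h}=\Gamma^{h}_{\uparrow}(x)$ almost surely for every $h$. I would then show the reverse kernel inherits this support property. By Eq.~\eqref{eq:reverse}, $p_\theta(z_s\mid z_t)$ is proportional to $q_{t\mid s}(z_t\mid z_s)$, which vanishes unless $z_t=\Gamma^{h}_{\uparrow}(z_s)$. The prior at $t_H=1$ is a point mass on the root.

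From these two facts, induction downward from the root gives that $p(z_{t_{h-1}}\mid z_{t_h})$ is supported on children of $z_{t_h}$. In a tree, every node has a unique ancestor chain. Hence the only path ending at $z_{t_0}=x$ with nonzero weight is $z_{t_h}=\Gamma^{h}_{\uparrow}(x)$ for all $h$. The sum therefore reduces to a single product, and since $p(z_{t_H})=1$ at the root,
\[
\log p(x)=\sum_{h=1}^H \log p\bigl(z_{t_{h-1}}=\Gamma^{h-1}_{\uparrow}(x)\mid z_{t_h}=\Gamma^{h}_{\uparrow}(x)\bigr).
\]
The logarithm of a product becoming a sum is what turns the cross-level bound into a sum of in-level bounds.

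\textbf{Step 3: apply the in-level bound.} For each $h$, apply Lemma~\ref{lemma:giddelbo} on the interval $[t_{h-1},t_h]$, with the index shifted so that $x^{h-1}=\Gamma^{h-1}_{\uparrow}(x)$ and $x^{h}=\Gamma^{h}_{\uparrow}(x)$. Each summand is then at least $ELBO(h)$, with the closed form available from Theorem~\ref{theo:in-level ELBO}. Summing over $h$ gives the claim.

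\textbf{Main obstacle.} The main difficulty is Step 2: rigorously justifying that the reverse transition across a whole level keeps zero mass off the tree path. The per-step kernel has this property, but the level transition is a limit of infinitesimal kernels.

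My first approach would be to observe that each in-level reverse CTMC has rates supported only on (parent $\to$ child) jumps and self-loops. I would read this off $\hat Q_t$: it is built from $Q_t(z_s,z_t)$, which is nonzero only for $z_t=\Gamma^{h}_{\uparrow}(z_s)$. Consequently the level-$h$ reverse transition maps a point mass at a node of height $h$ to a distribution on that node's children. Equivalently, the offspring-supported parameterization $q_t(\cdot\mid\mathbf{x}_\theta^{h-1})$ only involves $j$ with $x_j\in\Gamma^{h-1}_{\downarrow}(z_t)$.

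A second point to check is the boundary term $C$ in Lemma~\ref{lemma:giddelbo}. Its KL prior term is zero because the forward marginal at $t_h$ is deterministic and coincides with the conditioning state. It should therefore either vanish or already be absorbed into the closed-form $ELBO(h)$, so that no leftover constants appear when the levels are summed.
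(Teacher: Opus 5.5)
Your proposal is correct and follows the paper's own, largely implicit, argument. It factorizes over the level thresholds by the Markov property, collapses the path sum onto the unique ancestor chain $\Gamma^{h}_{\uparrow}(x)$ using the tree structure and the point-mass root prior, and applies Lemma~\ref{lemma:giddelbo} level by level with the index shift you correctly flag. You are more careful than the paper, which only asserts the collapse ``from the Markov property together with the tree structure'' and implicitly drops the boundary term $C$. Note also that the final inequality does not need the collapse: discarding the other, nonnegative path terms can only lower the sum.
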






\subsection{Parameter and Memory Efficiency}
\label{TDLMDiscussion}
A key advantage of TDLM is the parameter efficiency. Standard token prediction uses an output projection of size $d \times V$, where $d$ is the model dimension and $V$ is the vocabulary size. TDLM instead predicts children in a vocabulary tree, reducing the output layer to size $d \times K$, where $K$ is the branching factor and is typically a small constant independent of $V$. For example, a two-level tree with $K=512$ can represent over $250{,}000$ tokens. When $d=768$, $V\approx 50{,}000$, and $K=512$, the output layer shrinks from $38.4$ million to $0.4$ million parameters, a nearly $100\times$ reduction. In small-scale DiT-style models, this output layer can account for over $20\%$ of total parameters, so the savings can be reallocated to the backbone.

TDLM is also substantially more memory efficient during training. In standard token prediction, the output logits have shape $B \times S \times V$, where $B$ denotes the batch size, $S$ the sequence length, and $V$ the vocabulary size, so activation memory scales linearly with $V$. Under BF16 with $B=512$, $S=512$, and $V \approx 50{,}000$, the logits alone require about $24.4$~GiB of memory. With tree-based prediction and $K=512$, this is reduced to about $0.25$~GiB. Since training also materializes transient tensors of comparable size, the practical memory savings are even greater. Empirically, TDLM reduces GPU memory usage by roughly half relative to competing methods using small- and base-scale DiT models, making it particularly well suited to resource-constrained settings.


\section{Experiments}
\subsection{Experimental Setup}
We follow the experimental setup of prior works~\cite{rutte2025generalized, zhou2025next} to evaluate the language modeling capability of our model. Specifically, we use the widely adopted OpenWebText (OWT) dataset with sequence length $512$ and no packing. We take $100000$ out of $8,013,769$
data as the validation set. Architecturally, our approach yields a much smaller classification head, reducing parameters by about $23\%$ ($12\%$) compared to DiT-small (DiT-base). To maintain a comparable total parameter budget, we increase the number of attention blocks. Specifically, while DiT-small uses $12$ blocks ($92.1$M parameters) with large embedding/classification layers ($77.8$M), our model uses $17$ blocks ($130.5$M) with substantially smaller embedding/classification layers ($39.7$M). Likewise, for DiT-base, we increase the number of attention blocks from $24$ ($321.2$M) to $27$ ($361.3$M), while reducing the embedding/classification layers from $103.6$M to $52.9$M to maintain comparable or smaller total parameters to prior works. Implementation details are provided in the Appendix~\ref{sec:imple-detail}.

To construct the vocabulary tree, we apply a recursive K-means procedure \citep{zhou2025next} that partitions each node into $K$ children (branching factor) until each leaf contains a single token  (see Algorithm \ref{alg:tree_construct}). To accommodate diffusion modeling, we enforce a uniform leaf depth by padding shorter paths in the tree, repeating the terminal token until the maximum depth is reached. A prescribed min/max ratio is used as a hyper-parameter to control the range of node size. For experiments in section~\ref{sec:main-results}, we use $K = 512$ and min/max ratio $0.8/1.2$. We do a full ablation study on the choice of these hyperparameters in Section~\ref{sec:ablation}.

\begin{wraptable}{r}{0.58\textwidth}
\vspace{-10pt}
\centering
\footnotesize
\setlength{\tabcolsep}{2pt}
\renewcommand{\arraystretch}{1.0}

\resizebox{0.6\textwidth}{!}{%
\begin{tabular}{l c c c}
\toprule
\textbf{Model} & \textbf{Train. toks.} & \textbf{Val. PPL ($\downarrow$)} & \textbf{Gen. PPL ($\downarrow$)} \\
\midrule
GPT2~\textsuperscript{$\dagger$} & unk. & 23.40 & -- \\
Llama110M (retrain.)\textsuperscript{$\dagger$} & 262B & 16.11 & -- \\
\midrule
SEDD \citep{lou2023discrete}~\textsuperscript{$\dagger$} & 262B & $\le$24.10 & -- \\ 
\hdashline
MDLM-small \citep{sahoo2024simple}\textsuperscript{$*$} & 131B & $\le$27.39 & 163.7 \\
GIDD+-small \citep{rutte2025generalized}\textsuperscript{$*$} & 131B & $\le$25.82 & 170.2 \\
HDLM-small \citep{zhou2025next}\textsuperscript{$*$} & 131B & $\le \mathbf{23.25}$ & \textbf{148.0} \\
TDLM-small (Ours) & 131B & $\le \underline{25.50}$ & \underline{159.3} \\
\midrule
HDLM-base \citep{zhou2025next}\textsuperscript{$*$} & 131B & $\le \underline{19.22}$ & \underline{139.9} \\
TDLM-base (Ours) & 131B & $\le \mathbf{18.95}$ & $\mathbf{138.0}$ \\
\bottomrule
\vspace{-8pt}
\end{tabular}}

\caption{Validation and generative perplexity on OWT. \textbf{Bold} and \underline{underline} denote best and second best. Adopted from \citet{zhou2025next}\textsuperscript{$*$}; adopted from corresponding previous work\textsuperscript{$\dagger$}.}
\label{tab:main_result}
\vspace{-20pt}
\end{wraptable}

\subsection{Main Results}

\label{sec:main-results}
\begin{figure*}[htbp]
\vspace{-10pt}
  \centering
  \includegraphics[width=0.49\linewidth]{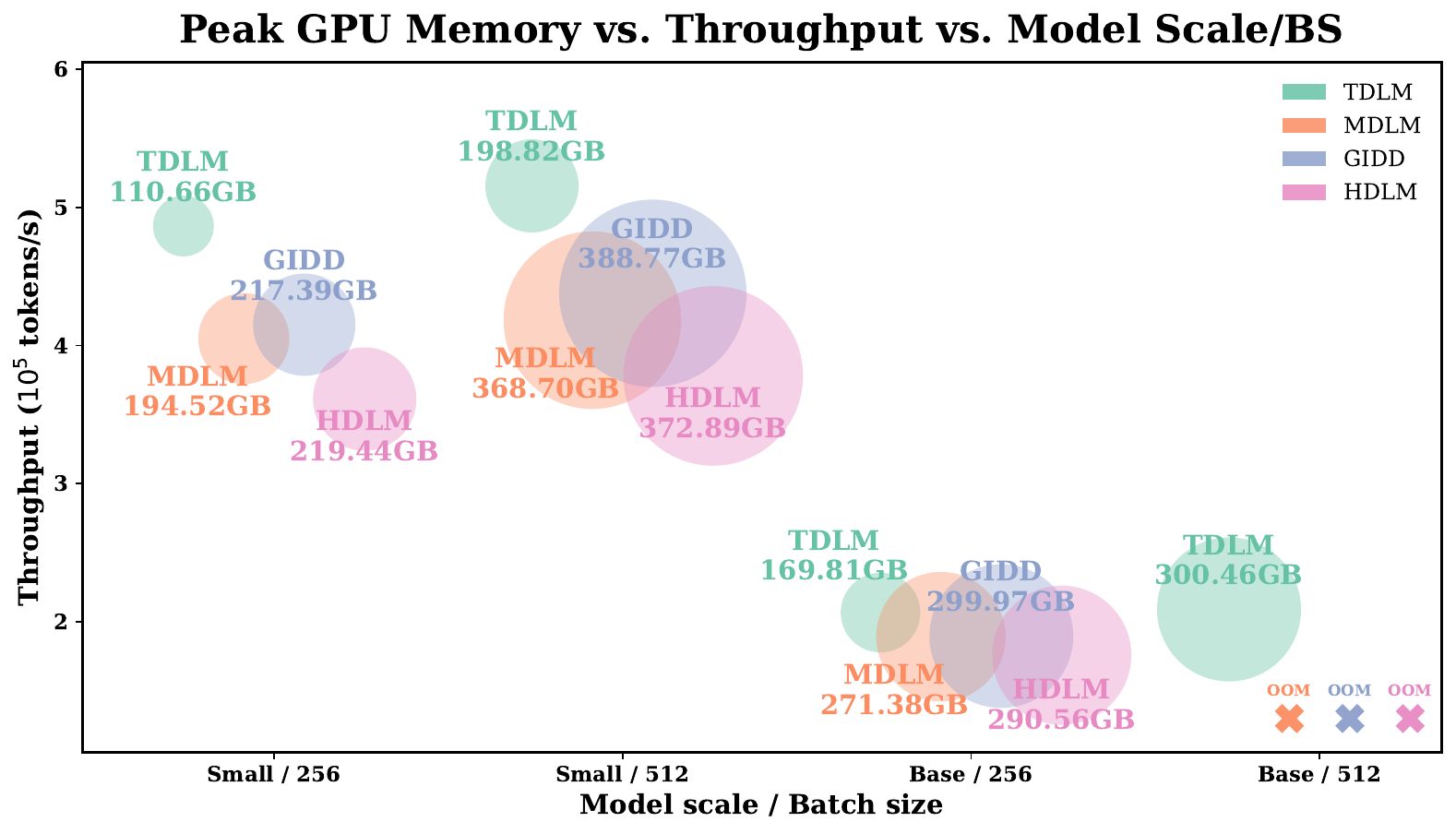}
  \includegraphics[width=0.49\linewidth]{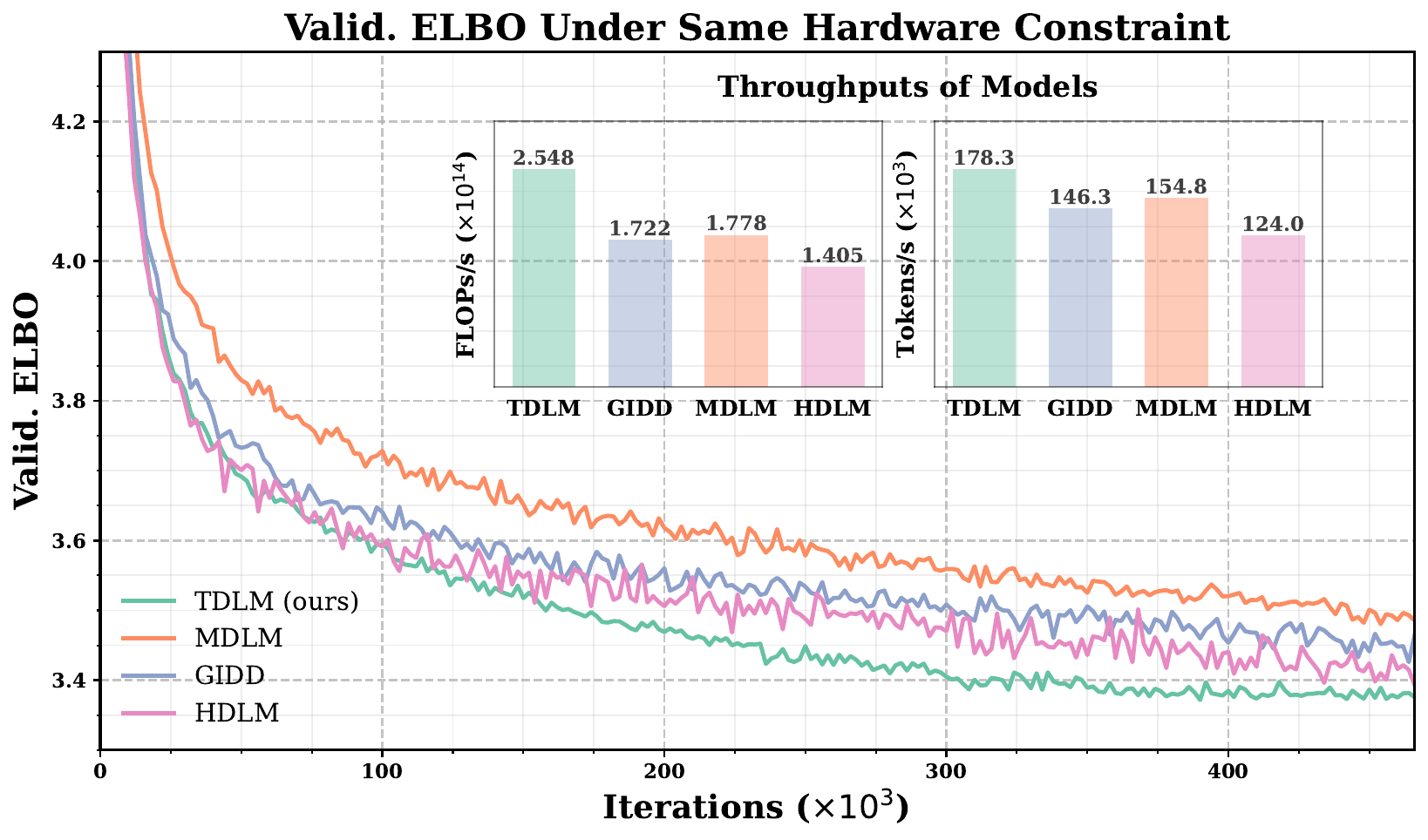}
  \caption{Memory effiency of TDLM. (Left) Throughput and peak memory comparison (denoted as size of the circle) across diffusion language models at different scales. (Right) Validation negative ELBO and average Flops/s of small scale models during training using full capacity of four 24 GB GPUs.}
  \label{fig:hardware-ablation}
  \vspace{-10pt}
\end{figure*}
Following previous works~\cite{zhou2025next}, we adopt the validation perplexity and the generative perplexity as metrics, using gpt2-large as the reference model. We include autoregressive models and contemporary diffusion language models as baselines to benchmark our model.

Table \ref{tab:main_result} shows that our method outperforms MDLM and GIDD on both perplexity metrics, supporting the benefit of exploiting tree-structured token hierarchy, and suggesting that child prediction is a viable language modelling solution. Compared with HDLM, our method is less competitive at small scale but slightly surpasses it at base scale. This trend suggests that base models better capture per-level dynamics, allowing improvements at each level to accumulate into stronger overall performance.

Our method also substantially improves training efficiency, since child prediction operates over a much smaller label space than full-vocabulary classification. As shown in Figure \ref{fig:hardware-ablation} (left), at matched model scale and batch size, it roughly halves peak GPU memory on both scales and improves training throughput by about $25\%$ for small models. Under a limited budget of four RTX-3090 GPUs, Figure \ref{fig:hardware-ablation} (right) further demonstrates that our method achieves both higher throughput and lower validation perplexity than all baselines. These results highlight improved training efficiency and perplexity performance under resource-constrained settings.


\subsection{Ablation Studies}
\label{sec:ablation}
In this section, we attempt to answer the following questions: (1) How do the branching factor, height, and the node size of the tree affect performances? (2) How does the validation ELBO change across levels? (3) How does assigning different training weights to each level affect model performance? (4) How does different sampling schedule affect the generative perplexity? All ablations are completed with small-scale model and batch size $128$.

\textbf{Branching Factor, Height, and Cluster Size.} Figure~\ref{fig:ablation-figures} (left) presents two sets of ablations that jointly probe tree-construction hyperparameters. In the first ablation (solid curves), we fix the node-size ratio and vary the branching factor $K$, which accordingly determines the resulting tree height $H$. Across these settings, we observe a consistent pattern that shallower trees tend to achieve lower validation negative ELBO. The effect of $K$ itself, however, is mixed once tree height is fixed: within the $H{=}3$ group ($K\in\{64,128,256\}$) and the $H{=}2$ group ($K\in\{512,1024\}$), the curves are distinguishable but not strictly monotonic.

\begin{figure}[t]
  \centering

  \begin{subfigure}[t]{0.52\linewidth}
    \centering
    \includegraphics[width=\linewidth]{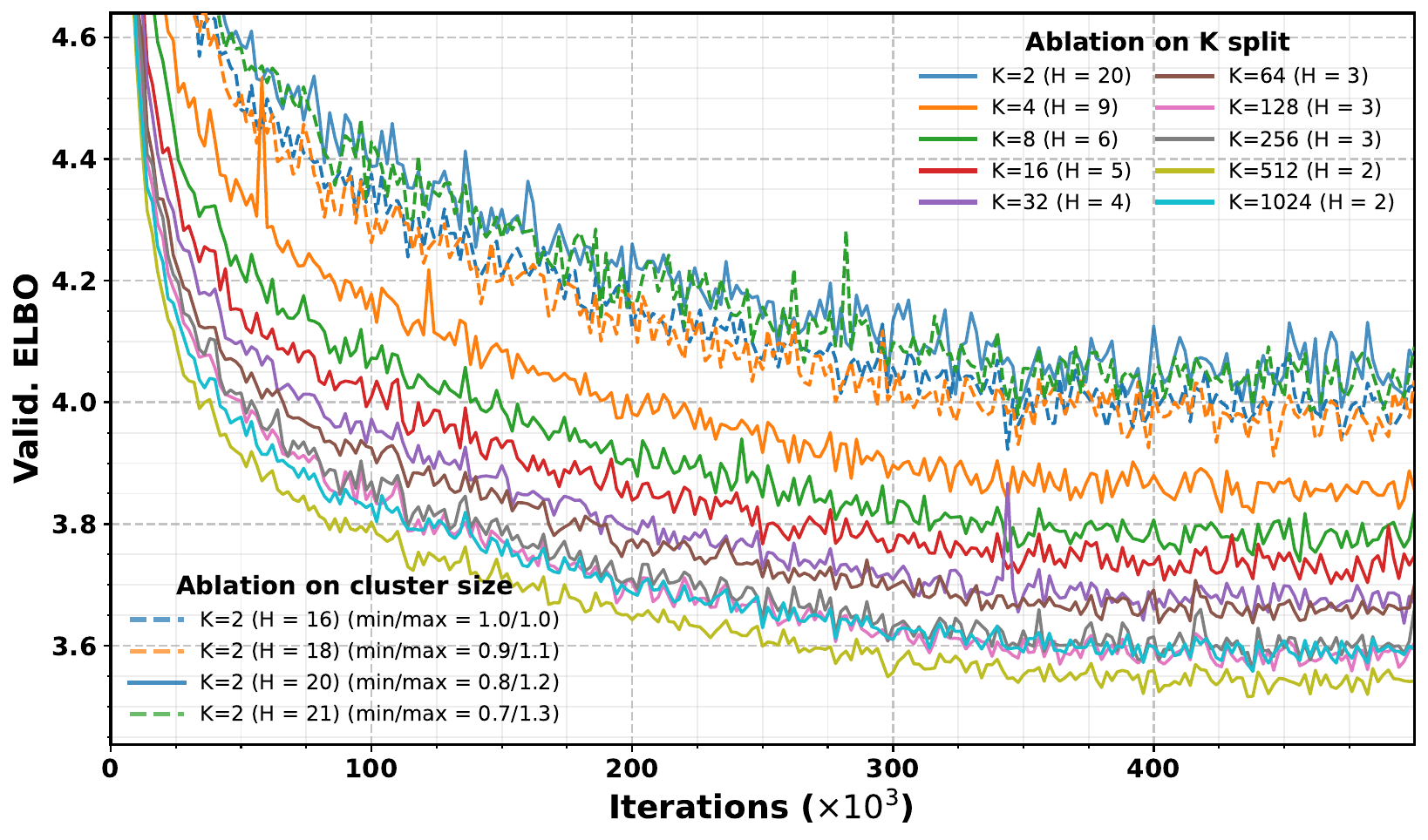}
    \label{fig:k-split-clustersize-ablation}
  \end{subfigure}
  \hfill
  \begin{subfigure}[t]{0.47\linewidth}
    \centering
    \includegraphics[width=\linewidth]{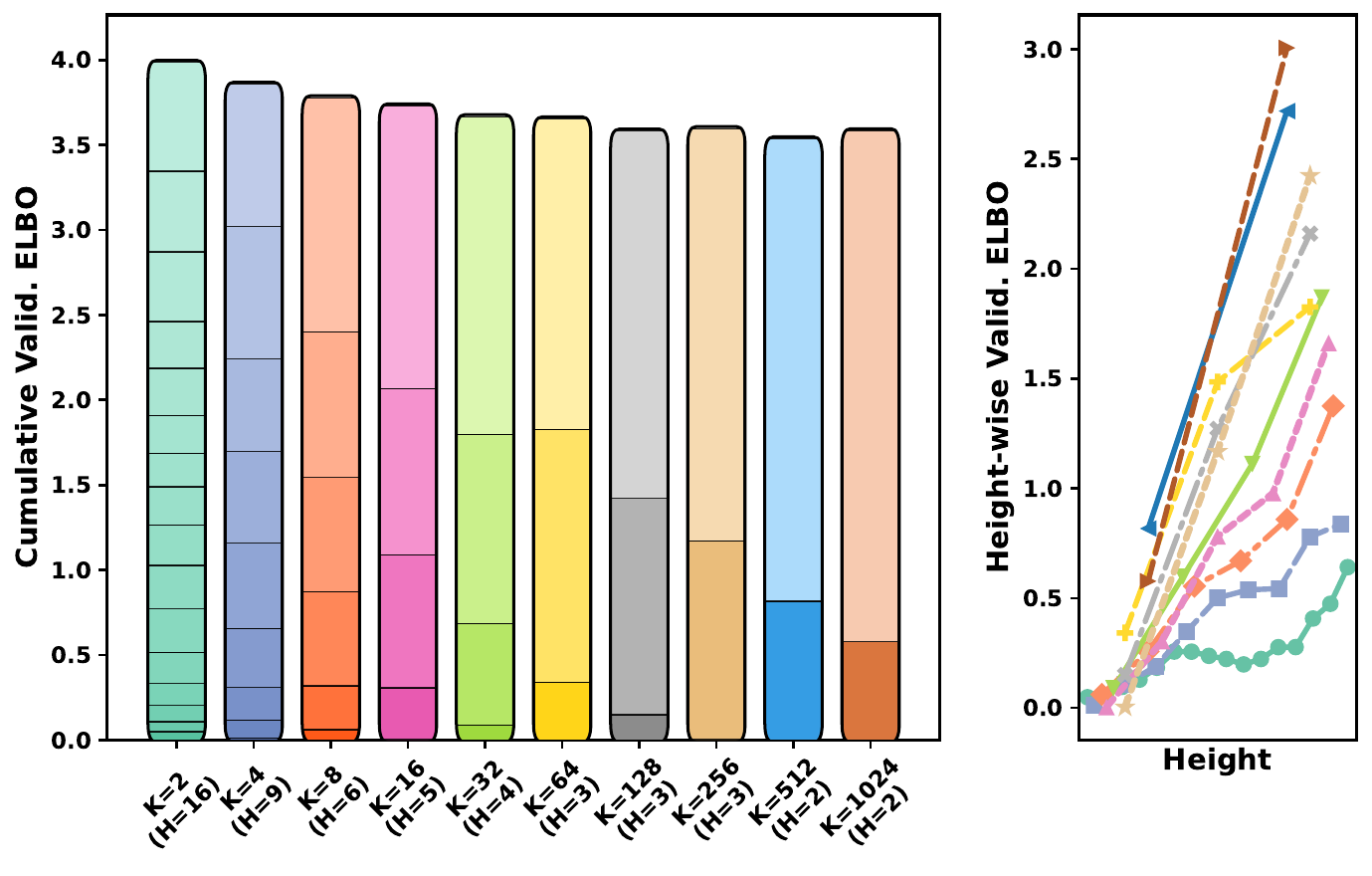}
    \label{fig:level-elbo-ablation}
  \end{subfigure}
\vspace{-20pt}
  \caption{Left: Validation nagative ELBO of various branching factor $K$ and cluster size ratio, plotted over training iterations. Solid lines indicate varying $K$ and fixed cluster size ratio, while dashed lines indicate fixed $K$ but varying cluster size. Middle: each bar indicates the cumulative validation negative ELBO obtained from a different tree construction, while each stack indicates the contribution of a particular tree height. (top stack indicates root level) Right: raw validation negative ELBO of each height level under different tree constructions.
}
  \label{fig:ablation-figures}
  \vspace{-15pt}
\end{figure}

In the second ablation (dashed curves), we fix $K{=}2$ and vary the node-size ratio. Varying the node-size ratio increases the height from $H{=}16$ to $H{=}21$ as child nodes become more imbalanced.
These curves remain tightly clustered throughout training, suggesting a comparatively smaller effect on ELBO than the changes incurred by the branching factor.

\textbf{Pattern of In-Level ELBOs.} By Theorem~\ref{theorem:cross-level-elbo}, TDLM’s cross-level ELBO decomposes into a sum of in-level ELBOs, making it important to understand the behavior of each level. Figure~\ref{fig:ablation-figures} (middle and right) shows that these terms grow roughly linearly with height for most trees, but nearly exponentially for the binary tree due to its much greater depth. 
As a result, higher levels near the root contribute a large share of the cross-level ELBO, consistent with the intuition that coarser, less informative levels are harder to model. It remains unclear, however, whether this imbalance reflects suboptimal optimization or an inherent property of the tree structure. 
To investigate this, we introduce level-specific weights in the training objective and study their effect on both in-level and cross-level ELBOs.

\textbf{Level-wise Training Weights.}
We investigate whether placing greater emphasis on higher hierarchical levels during training can reduce their in-level ELBOs.
In the binary-tree setting, where this disparity is most severe, we apply level-wise weights that increase with height, using either a linear or exponential schedule, parameterized by $\gamma_{\mathrm{lin}}$ and $\gamma_{\mathrm{exp}}$ (see Appendix~\ref{section:weight-schedule}). These schemes impose heavier penalties on higher levels, which are more challenging and information-sparse. As shown in Figure~\ref{fig:weight-ablation}, however, the gains are marginal: only the mildest linear reweighting yields a slight improvement, whereas stronger reweighting consistently degrades performance. 
Notably, the exponential schedule with $\gamma_{\mathrm{exp}}=0.3$, which most closely follows the empirical growth of the in-level ELBOs, yields the worst validation cross-level ELBO. This suggests that the time-dependent model is already near-optimal under the unweighted ELBO, and that level-wise reweighting mainly distorts the target objective.
\begin{wrapfigure}{r}{0.5\textwidth}
\vspace{-8pt}
  \centering
  \includegraphics[width=\linewidth]{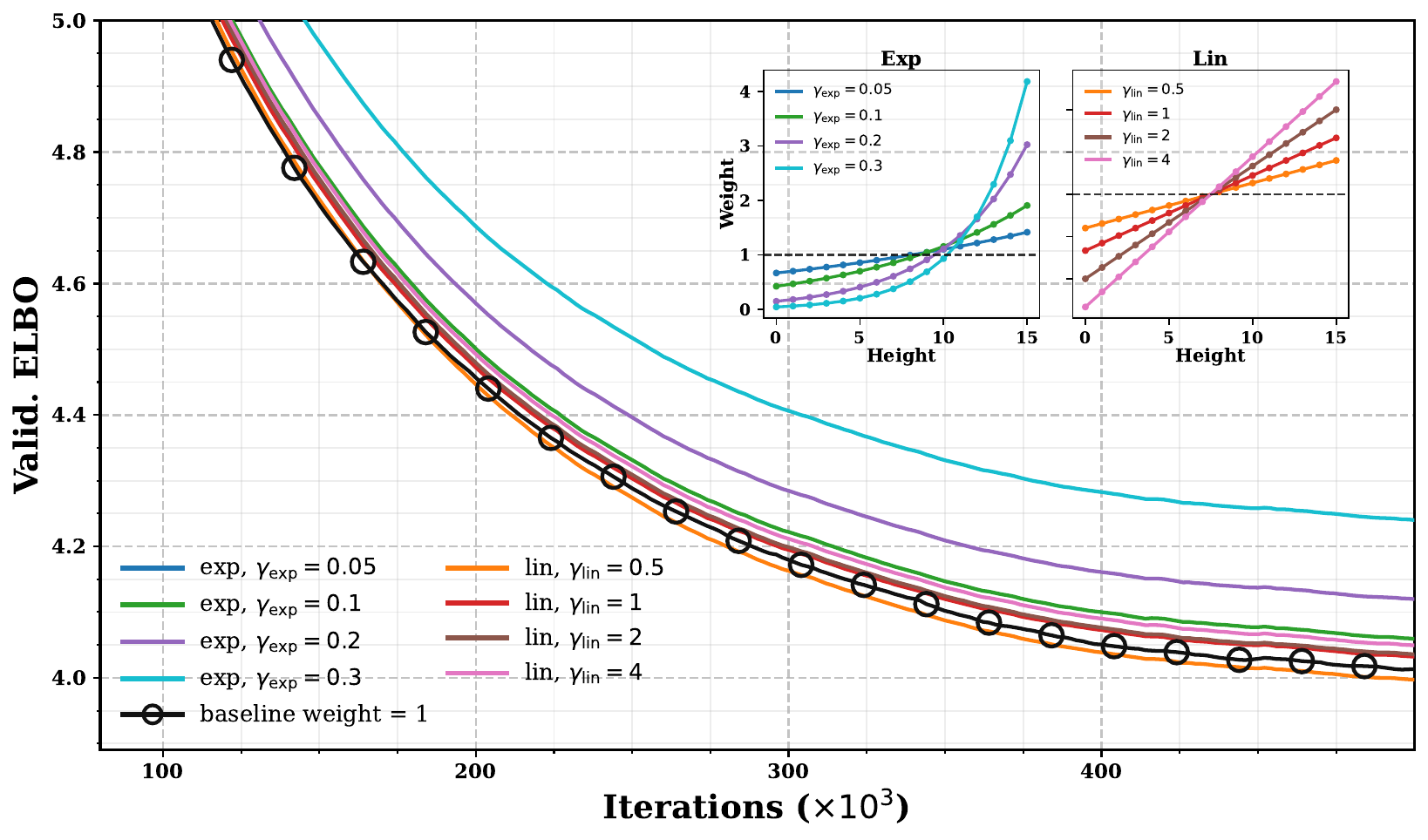}
  \vspace{-15pt}
  \caption{Smoothed validation negative ELBO of different weight schedules. The weight schedules are plotted in the subfigures (Left: exponential; Right: linear).}
  \label{fig:weight-ablation}
\vspace{110pt}
\end{wrapfigure}

\begin{wraptable}{r}{0.5\textwidth}
\vspace{-385pt}
  \centering
  \resizebox{\linewidth}{!}{%
  \begin{tabular}{ccc}
    \toprule
    Model & \makecell{Inference Steps\\(Top/Bottom Level)} & Gen.\ PPL ($\downarrow$) \\
    \midrule
    \multirow{7}{*}{\makecell{TDLM-base\\($H=2$)}}
      &  64 / 448 & 142.1 \\
      & 128 / 384 & 140.3 \\
      & 192 / 320 & 140.7 \\
      & 256 / 256 & 138.0 \\
      & 320 / 192 & 141.7 \\
      & 384 / 128 & 149.2 \\
      & 448 / 64 & 165.3 \\
    \bottomrule
  \end{tabular}}
  \caption{Generation perplexity with different allocations of the total 512 inference steps using TDLM-base model.}
  \label{tab:tdlm_base_inference_steps_genppl}
\end{wraptable}

\textbf{Sampling Schedule.} 
Another unique perspective of our approach is that its inference process can be viewed as multiple independent inference processes. 
Given the observation that higher levels incur larger in-level ELBOs, it seems favorable to allocate more inference steps to those levels. 
However, using the two-level TDLM base model and fixing the total number of inference steps, we empirically find that a balanced allocation of inference steps between the two levels yields the best result as shown in table~\ref{tab:tdlm_base_inference_steps_genppl}.

\section{Related Works}
In the pre-transformer era, structured vocabulary is explored to enable subgroup prediction in autoregressive models, as exemplified by hierarchical softmax \citep{morin2005hierarchical} and adaptive softmax \citep{grave2017efficient}, primarily for computational efficiency. However, hierarchical formulations are not naturally well aligned with autoregressive modeling, and direct prediction over a flat vocabulary has remained the dominant design in modern language models. As language models continue to scale, training is increasingly constrained by GPU memory capacity. To alleviate this burden, a range of methods has been proposed, including quantization to reduce numerical precision~\citep{zhu2024survey}, low-rank gradient projection~\citep{zhao2024galore}, and low-rank activation projection~\citep{shamshoum2025compact}. Despite their effectiveness, these approaches mainly target optimizer states or intermediate representations, and do not directly reduce the activations of the output layer, which remain a major source of training-time memory consumption in smaller-scale models. With the emergence of diffusion language models, interest in exploiting vocabulary structure has reemerged. Recent works, using approaches such as semantic clustering \citep{zhou2025next} and $n$-bit token representations \citep{chao2025beyond}, revisit structured vocabulary in diffusion language models, where hierarchical formulations align more naturally with the coarse-to-fine denoising process. Nonetheless, these methods still predict over the flat vocabulary and are primarily motivated by improved perplexity rather than parameter or memory efficiency. In contrast, our work formally formulates a tree-structured vocabulary, replaces direct flat-vocabulary prediction with children prediction, and improves the efficiency of calssification head in diffusion language models while achieving performance comparable to the state of the art.
\section{Conclusion and Discussion}
\label{sec:discussion}
In this work, we propose a tree-structured discrete diffusion language model that parameterizes the posterior via child-prediction instead of token-level prediction. Under resource-constrained settings, this formulation substantially improves memory and parameter efficiency, while achieving performance comparable to the state-of-the-art.

Beyond efficiency and performance gains, child prediction enables new algorithmic and architectural opportunities. In particular, the reduced prediction space makes joint modeling tractable; we briefly discuss this direction and provide preliminary results in Appendix~\ref{appendix:joint-modeling}. Moreover, the parameters saved in the output layer can be reallocated to expand tokenizer's vocabulary, allowing for finer-grained textual representations at a fixed sequence length.

\bibliography{colm2026_conference}
\bibliographystyle{colm2026_conference}

\newpage
\appendix
\section{Proofs and Derivations}
\subsection{Proof of Theorem~\ref{theo:in-level ELBO}} 
\label{appendix:proof in-level ELBO}Here we restate the theorem. 
\begin{theorem}[Closed Form In-Level CT-ELBO of TDLM]
Following the notation in Lemma~\ref{lemma:giddelbo}, the continuous-time ELBO for in-level process $z_{t}^{h}$ admits a closed form:
\begin{align*}
\mathrm{ELBO(h)}
&= \mathbb{E}_{t,z_t}\!\Bigg[
\delta\left\{ z_t = x^{h+1} \right\} 
\left(-\frac{{\alpha_t^h}'}{1-\alpha_t^h}\right)
\log p_{\theta}^{h}\!\left(x^h\right)\Bigg]
,
\end{align*}
where $t \sim \mathcal{U}[t_{h},t_{h+1}]$ and $p_{\theta}^{h}$ is the model-predicted probability mass function over ${z_t}'s$ children nodes. 
\end{theorem}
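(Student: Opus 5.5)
We plan to substitute the explicit forward rate $Q_t$ and marginals of Proposition~\ref{prop:cum matrix} and Lemma~\ref{lemma1} into the integrand of Lemma~\ref{lemma:giddelbo}, then evaluate it case by case on the support of $z_t$. Under $q_t(\cdot\mid x^h)$, $z_t$ takes only two values: $x^h$ with probability $\alpha_t^h$, and $x^{h+1}$ with probability $1-\alpha_t^h$. So we split the expectation over $z_t$ into these two events.

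The first step is to record the relevant entries of $Q_t$. The only nonzero off-diagonal entries are $Q_t(a,\Gamma^{h+1}_{\uparrow}(a)) = -{\alpha_t^h}'/\alpha_t^h$ for $a\in\mathcal I_h$, and the diagonal entries are $Q_t(a,a)={\alpha_t^h}'/\alpha_t^h$ for $a\in\mathcal I_h$ and $0$ on $\mathcal I_{h+1}$. We also compute the model marginals. Since every candidate child $x_j\in\Gamma^h_\downarrow(z_t)$ shares the parent $x^{h+1}$, we get
\[
q_t(x_j\mid\mathbf x^h_\theta)=\alpha_t^h\,\mathbf x^h_\theta[j],
\qquad
q_t(x^{h+1}\mid\mathbf x^h_\theta)=1-\alpha_t^h,
\]
and $q_t(\cdot\mid\mathbf x^h_\theta)$ vanishes elsewhere.

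\emph{Case $z_t=x^h$.} The first sum requires $Q_t(z_s,x^h)\neq0$ with $z_s\neq x^h$. Because no state flows into a height-$h$ node from elsewhere, this sum is empty and contributes zero. In the second sum, $Q_t(z',x^h)$ is nonzero only for $z'=x^h$, which gives
\[
-\frac{{\alpha_t^h}'}{\alpha_t^h}\cdot\frac{q_t(x^h\mid\mathbf x^h_\theta)}{q_t(x^h\mid\mathbf x^h_\theta)}=-\frac{{\alpha_t^h}'}{\alpha_t^h}.
\]
This term does not depend on $\theta$.

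\emph{Case $z_t=x^{h+1}$.} The first sum runs over the children $z_s=a$ of $x^{h+1}$. Only $a=x^h$ has $q_t(a\mid x^h)\neq0$, so the sum reduces to
\[
\Big(-\frac{{\alpha_t^h}'}{\alpha_t^h}\Big)\frac{\alpha_t^h}{1-\alpha_t^h}\,
\log\frac{\alpha_t^h\,p^h_\theta(x^h)\,(1-\alpha_t^h)}{(1-\alpha_t^h)\,\alpha_t^h}
=-\frac{{\alpha_t^h}'}{1-\alpha_t^h}\log p^h_\theta(x^h).
\]
The second sum runs over all children $a$ (the diagonal entry at $x^{h+1}$ is zero). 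It equals
\[
-\frac{{\alpha_t^h}'}{\alpha_t^h}\sum_j\frac{\alpha_t^h\,\mathbf x^h_\theta[j]}{1-\alpha_t^h}=-\frac{{\alpha_t^h}'}{1-\alpha_t^h},
\]
using $\sum_j\mathbf x^h_\theta[j]=1$. This is again $\theta$-independent.

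Combining the two cases leaves the indicator term $\mathbf 1\{z_t=x^{h+1}\}\big(-{\alpha_t^h}'/(1-\alpha_t^h)\big)\log p^h_\theta(x^h)$, plus $\theta$-free terms. It remains to dispose of those $\theta$-free terms and the constant $C$ of Lemma~\ref{lemma:giddelbo}. We take the expectation of the residual terms over $z_t$:
\[
\alpha_t^h\Big(-\frac{{\alpha_t^h}'}{\alpha_t^h}\Big)+(1-\alpha_t^h)\Big(-\frac{{\alpha_t^h}'}{1-\alpha_t^h}\Big)=-2{\alpha_t^h}'.
\]
This is \emph{not} zero, which is the main obstacle. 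We expect it to be handled by a sign or bookkeeping convention that the statement absorbs. Two routes are available. The first is to re-derive the Lemma~\ref{lemma:giddelbo} integrand with the full GIDD form, which includes the forward-rate term $\sum_{z_s\ne z_t}Q_t(z_s,z_t)q_t(z_s\mid x^h)/q_t(z_t\mid x^h)$; that term cancels the model's exit rate exactly as in MDLM. The second is simply to absorb the constant: $\int_{t_h}^{t_{h+1}}{\alpha_t^h}'\,dt=-1$ is a fixed constant.

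For $C$, we note that the boundary terms are deterministic. We have $z_{t_h}=x^h$ almost surely, so the reconstruction term is $\log 1=0$. At the other end, $q_{t_{h+1}}(\cdot\mid x^h)=\delta_{x^{h+1}}$ matches the conditioning, so the KL term vanishes. The closed form then follows, equal to the stated expression up to $\theta$-independent constants. We would state this carefully, preferring the first route so that equality holds exactly.
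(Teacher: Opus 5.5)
Your case split on $z_t\in\{x^h,x^{h+1}\}$ is the same as the paper's. Your first-term computation and the $z_t=x^h$ case of the second term are correct. The gap is a sign error in the second term at $z_t=x^{h+1}$. The integrand of Lemma~\ref{lemma:giddelbo} has a leading minus, $-\sum_{z'}Q_t(z',z_t)\,q_t(z'\mid\mathbf{x}^h_\theta)/q_t(z_t\mid\mathbf{x}^h_\theta)$; you kept it when $z_t=x^h$ but dropped it here. Since $Q_t(a,x^{h+1})=-{\alpha_t^h}'/\alpha_t^h$ for each child $a$ of $x^{h+1}$, the correct value is
\[
-\Big(-\frac{{\alpha_t^h}'}{\alpha_t^h}\Big)\sum_j\frac{\alpha_t^h\,\mathbf{x}^h_\theta[j]}{1-\alpha_t^h}=+\frac{{\alpha_t^h}'}{1-\alpha_t^h}.
\]
The $\theta$-free residual then has expectation $\alpha_t^h\big(-{\alpha_t^h}'/\alpha_t^h\big)+(1-\alpha_t^h)\,{\alpha_t^h}'/(1-\alpha_t^h)={\alpha_t^h}'-{\alpha_t^h}'=0$. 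This is exactly the paper's computation, so the theorem holds with equality and there is no obstacle.

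Two sanity checks would have caught the error. First, the second term is $\hat{Q}_t(z_t,z_t)-Q_t(z_t,z_t)$. At $z_t=x^{h+1}$ it reduces to the reverse diagonal $\hat{Q}_t(x^{h+1},x^{h+1})\le 0$, but your value $-{\alpha_t^h}'/(1-\alpha_t^h)$ is positive. Second, with a perfect predictor $p^h_\theta(x^h)=1$, your version gives an ELBO of $\mathbb{E}_t[-2{\alpha_t^h}']>0$, which exceeds $\log p\le 0$.

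As a result, neither fallback you propose proves the statement as written.
\begin{itemize}
\item Absorbing $-2{\alpha_t^h}'$ as a constant gives equality only up to an additive term. That term is spurious, and it is exactly what breaks the bound.
\item The first route is unnecessary. The lemma's compensator $-Q_t(z_t,z_t)$ already agrees in expectation with the true reverse exit rate $\sum_{z_s\neq z_t}Q_t(z_s,z_t)\,q_t(z_s\mid x^h)/q_t(z_t\mid x^h)$, because $\sum_{z}\sum_{y\neq z}Q_t(y,z)\,q_t(y\mid x^h)=\sum_y q_t(y\mid x^h)\big(-Q_t(y,y)\big)$. Replacing one by the other (not adding it) makes the residual vanish pointwise, but only once the sign is fixed. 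With your sign, the two pieces add to $-2{\alpha_t^h}'/(1-\alpha_t^h)$ instead of cancelling.
\end{itemize}
Your argument that $C=0$ because both endpoints are deterministic is a reasonable addition; the paper drops $C$ without comment.
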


\begin{proof}
    
\begin{align*}
&\log p(z_{t_{h}}^{h}=x^{h}|z_{t_{h+1}}^{h}=x^{h+1})
\\&\ge
\mathbb{E}_{t,z_t}
\Biggl[
\sum_{z_s \neq z_t}
Q_t(z_s,z_t)\,
\frac{q_t(z_s \mid x)}{q_t(z_t \mid x)}
\Biggl\{
\log
\frac{\hat{Q}_t(z_t,z_s)\,q_t(z_t \mid x)}
     {Q_t(z_s,z_t)\,q_t(z_s \mid x)}\Biggr\}
+ \hat{Q}_t(z_t,z_t)
- Q_t(z_t,z_t)
\Biggr]
+ C\\
&= \mathbb{E}_{t,z_t} \Biggl[
   \underbrace{\sum_{z_s \ne z_t}
   Q_t(z_s, z_t)\,
   \frac{
     q_t\bigl(z_s \mid x^h\bigr)
   }{
     q_t\bigl(z_t \mid x^h\bigr)
   }
   \log
   \frac{
     q_t\bigl(z_s \mid \mathbf{x}_\theta^{h}\bigr)\,
     q_t\bigl(z_t \mid x^h\bigr)
   }{
     q_t\bigl(z_t \mid \mathbf{x}_\theta^{h}\bigr)\,
     q_t\bigl(z_s \mid x^h\bigr)
   }}_{\text{First Term}}
\underbrace{- \sum_{z'}
   Q_t(z', z_t)\,
   \frac{
     q_t\bigl(z' \mid \mathbf{x}_\theta^{h}\bigr)
   }{
     q_t\bigl(z_t \mid \mathbf{x}_\theta^{h}\bigr)
   }}_{\text{Second Term}}
   \Biggr]
   + C,
\end{align*}
Note that $Q_t$ is only non–zero when
\[
z_s = z_t
\quad\text{or}\quad
\bigl(z_t = x^{h+1}
\;\text{ and }\;
z_s = x^h\bigr).
\]

\paragraph{First term in ELBO.}

Since the summation is over $\{z_s\neq z_t\}$, the summand of the first term is only non–zero when
\[
z_t = x^{h+1}\text{ and } 
z_s = x^h.
\]
Therefore,
\begin{align*}
\text{First Term}
&=
-\frac{{\alpha_t^h}'}{\alpha_t^h}\,\frac{\alpha_t^h}{1-\alpha_t^h}\,
\log
\frac{
\displaystyle
\sum_{z \in \Gamma_{\downarrow}^{\,h}(z_t)}
p_{\theta}^{h}(z)\,q_t(z_s \mid z)\,(1-\alpha_t^h)
}{
\displaystyle
\sum_{z \in \Gamma_{\downarrow}^{\,h}(z_t)}
p_{\theta}^{h}(z)\,q_t(z_t \mid z)\,\alpha_t^h
}
\\
    &= -\,\frac{{\alpha_t^h}'}{\alpha_t^h}
      \cdot \frac{\alpha_t^h}{1 - \alpha_t^h}
      \cdot \log
      \frac{
        p_{\theta}^{h}\!\left(x^h\right)
        \alpha_t^h (1 - \alpha_t^h)
      }{
        (1-\alpha_t^h) \cdot 1 \cdot \alpha_t^h
      }\\
&=
-\frac{{\alpha_t^h}'}{1-\alpha_t^h}\,
\log p_{\theta}^{h}\!\bigl(x^h\bigr).
\end{align*}

\paragraph{Second term in ELBO.}

The summand of the second term is non–zero when 

\begin{align*}
    \text{(1) }z' = z_t = x^h\text{, or (2) }z_t = x^{h+1}\text{ and }z' \in \Gamma_{\downarrow}^{\,h}(z_t).
\end{align*}

For Case (1):
\begin{align*}
\text{Second Term}
&=-
\frac{{\alpha_t^h}'}{\alpha_t^h}.
\end{align*}

For Case (2):
\begin{align*}
    \text{Second Term} \quad
    &= -\left( -\frac{{\alpha_t^h}'}{\alpha_t^h} 
    \cdot 
    \frac{
        \left( 
            \sum_{z' \in \Gamma_{\downarrow}^{\,h}(z_t)} 
            p^h_{\theta}(z')\, \alpha_t^h
        \right)
    }{1 - \alpha_t^h}
    \right)\\
    &=\frac{{\alpha_t^h}'}{1-\alpha_t^h}
\end{align*}

\paragraph{Collecting terms.}

Therefore,
\begin{align*}
\text{ELBO}
&=
\mathbb{E}_{t,z_t}
\Bigl[
\delta\{z_t = x^{h+1}\}
\Bigl\{
-\frac{{\alpha_t^h}'}{1-\alpha_t^h}
\log p^{h}_{\theta}\!\bigl(x^h\bigr)
+ \frac{{\alpha_t^h}'}{1-\alpha_t^h}
\Bigr\}
\\
&\hspace{6em}
+ \delta\{z_t = x^h\}
\Bigl\{-\frac{{\alpha_t^h}'}{\alpha_t^h}\Bigr\}
\Bigr].
\end{align*}

The collection of terms irrelevant to $\theta$ has zero expectation:
\begin{align*}
\mathbb{E}_{t,z_t}
\Bigl[
\delta\{z_t &= x^{h+1}\}
\Bigl(\frac{{\alpha_t^h}'}{1-\alpha_t^h}\Bigr)
+ \delta\{z_t = x^h\}
\Bigl(-\frac{{\alpha_t^h}'}{\alpha_t^h}\Bigr)
\Bigr]\\
&=
\mathbb{E}_t\bigl[(1-\alpha_t^h)\tfrac{{\alpha_t^h}'}{1-\alpha_t^h} + \alpha_t^h(-\tfrac{{\alpha_t^h}'}{\alpha_t^h})\bigr]
\\
&=
\mathbb{E}_t\bigl[{\alpha_t^h}' - {\alpha_t^h}'\bigr] = 0.
\end{align*}

Hence
\begin{align*}
\text{ELBO}
&=
\mathbb{E}_{t,z_t}
\Bigl[\delta\{z_t = x^{h+1}\}
(-\frac{{\alpha_t^h}'}{1-\alpha_t^h})\,
\log p^{h}_{\theta}\!\bigl(x^h\bigr)
\Bigr],
\end{align*}

\end{proof}

\subsection{Derivation of Proposition~\ref{prop:cum matrix}}
\label{Appendix:derivation of cum matrix}
Here we restate the proposition.
\begin{proposition}{}
Given the definition of the in-level time-inhomogeneous forward transition rate matrix $Q_t$ on $t\in [t_h,t_{h+1})$, the in-level time-inhomogeneous cumulative conditional transition matrix on $t_h \le s \le t \leq t_{h+1}$ is 
\begin{align*}
P_{t\mid s}
&=
\begin{bmatrix}
I & 0 & 0 & 0\\[4pt]
0 & \dfrac{\alpha_t^h}{\alpha_s^h}\, I_{|\mathcal{I}_{h}|} 
  & 0
  & 0\\[8pt]
0 & \Bigl(1-\dfrac{\alpha_t^h}{\alpha_s^h}\Bigr)\,\bigl(\mathbf{\Gamma_{\uparrow}^{(h,h+1)}}\bigr)^{\top} 
  & I_{|\mathcal{I}_{h+1}|} 
  & 0\\[4pt]
0 & 0 & 0 & I
\end{bmatrix},
\end{align*}
The general \(P_{t\mid s}\) then follows from the Markov property: for $t_i\leq s\leq t_{i+1}\leq t_j\leq t\leq t_{j+1}$,
\begin{align*}
P_{t\mid s}
&=
P_{l_{i+1}\mid s}\;
P_{l_{i+2}\mid l_{i+1}}\;
\cdots\;
P_{l_j\mid l_{j-1}}\;
P_{t\mid l_j}.
\end{align*}
\end{proposition}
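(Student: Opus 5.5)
The plan is to verify directly that the stated $P_{t\mid s}$ solves the Kolmogorov forward equation $\partial_t P_{t\mid s}=Q_t^\top P_{t\mid s}$ with $P_{s\mid s}=I$ on $[t_h,t_{h+1})$. Uniqueness of solutions to this linear ODE, whose coefficients are continuous on the open interval, then identifies it as the cumulative transition matrix. The initial condition is immediate, since $\alpha_s^h/\alpha_s^h=1$ and $1-\alpha_s^h/\alpha_s^h=0$. I will order the blocks as (heights $<h$, $\mathcal{I}_h$, $\mathcal{I}_{h+1}$, heights $>h+1$). Because every block outside the $\mathcal{I}_h\cup\mathcal{I}_{h+1}$ window is frozen, both in $Q_t$ (zero rows) and in $P_{t\mid s}$ (identity blocks), only the $2\times 2$ middle block needs computing.

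I will take the middle block of $Q_t$ to be
\[
\begin{bmatrix} \frac{{\alpha_t^h}'}{\alpha_t^h} I & -\frac{{\alpha_t^h}'}{\alpha_t^h}\,(\mathbf{\Gamma_{\uparrow}^{(h,h+1)}})^\top \\ 0 & 0\end{bmatrix}.
\]
This is the intended reading of the matrix in the statement, with the $\Gamma$ block transposed so that the dimensions match the row convention $Q_t(\tilde z,z)$, and with $\mathcal{I}_{h+1}$ absorbing. As a consistency check, each row sums to zero because $\mathbf{\Gamma_{\uparrow}^{(h,h+1)}}$ has exactly one $1$ per column. The off-diagonal entries $-{\alpha_t^h}'/\alpha_t^h$ are nonnegative since $\alpha^h$ is decreasing, so $Q_t$ is a valid generator.

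Then $Q_t^\top$ restricted to the middle block equals
\[
\frac{{\alpha_t^h}'}{\alpha_t^h}\begin{bmatrix} I & 0\\ -\mathbf{\Gamma_{\uparrow}^{(h,h+1)}} & 0\end{bmatrix}.
\]
Multiplying it by the candidate middle block of $P_{t\mid s}$ gives two nonzero blocks. The $(1,1)$ block is $\frac{{\alpha_t^h}'}{\alpha_t^h}\cdot\frac{\alpha_t^h}{\alpha_s^h}I=\frac{{\alpha_t^h}'}{\alpha_s^h}I$, which equals $\partial_t(\alpha_t^h/\alpha_s^h)I$. The $(2,1)$ block is $-\frac{{\alpha_t^h}'}{\alpha_s^h}\mathbf{\Gamma_{\uparrow}^{(h,h+1)}}$, which equals $\partial_t\bigl(1-\alpha_t^h/\alpha_s^h\bigr)\mathbf{\Gamma_{\uparrow}^{(h,h+1)}}$. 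The remaining blocks have zero derivative and zero right-hand side. So the ODE holds. Here the lower-left block of $P$ is the $|\mathcal{I}_{h+1}|\times|\mathcal{I}_h|$ matrix $\mathbf{\Gamma_{\uparrow}^{(h,h+1)}}$ under the column-stochastic convention $q_{t\mid s}=\mathrm{Cat}(P\mathbf z_s)$; the transpose written in the statement is a notational convention on $\Gamma$ that I will reconcile in the same way as above. As a sanity check, applying $P_{t\mid t_h}$ to $\mathbf{OneHot}(\Gamma^h_\uparrow(x))$ recovers the forward marginal via Lemma~\ref{lemma1}, since $\alpha^h_{t_h}=1$.

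For the endpoint $t=t_{h+1}$, where $\alpha_t^h=0$ and $Q_t$ blows up, I will define $P_{t_{h+1}\mid s}$ as the limit $t\uparrow t_{h+1}$ of the formula. This limit exists and is continuous, so it agrees with the marginal. The cross-level formula then follows from Chapman--Kolmogorov: the process is Markov, so $q_{t\mid s}=\sum q_{t\mid t_j}\cdots q_{t_{i+1}\mid s}$ over intermediate states at the thresholds, and this becomes a matrix product of the in-level $P$'s. The main obstacle is not algebraic but the bookkeeping of the transpose and indexing conventions between $Q_t(\tilde z,z)$, column-stochastic $P$, and the orientation of $\mathbf{\Gamma_{\uparrow}^{(h,h+1)}}$, together with handling the singularity at $t_{h+1}$ by the limiting argument. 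For composition order, note that with column-stochastic matrices the earlier-time factor acts first. The stated product must therefore be read with that ordering, equivalently as the transposed row-stochastic composition, and I will fix this convention explicitly.
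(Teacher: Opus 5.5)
Your proof is correct, but it takes a different route from the paper. The paper argues pathwise via holding times. For $z_s=\Gamma^{h}_{\uparrow}(x)$ it computes the survival probability $\exp\bigl(-\int_s^t\lambda_{z_s}(u)\,du\bigr)=\alpha_t^h/\alpha_s^h$. It then gets the parent entry from the first-jump integral $\int_s^t (\alpha_u^h/\alpha_s^h)\bigl(-{\alpha_u^h}'/\alpha_u^h\bigr)\,du = 1-\alpha_t^h/\alpha_s^h$, and notes that the parent is absorbing. You instead check the candidate against $\partial_t P_{t\mid s}=Q_t^\top P_{t\mid s}$ with $P_{s\mid s}=I$, and appeal to uniqueness for linear ODEs.

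The paper's computation derives the entries rather than checking a guess, and it gives them a direct meaning: either stay, or make a single jump to the absorbing parent. However, it relies on the holding-time representation of a time-inhomogeneous chain and leaves the zero entries implicit. Your verification needs the answer in advance, which the statement supplies. In exchange, it checks every block at once, including the frozen outer blocks, using only the Kolmogorov equation already quoted in the preliminaries.

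Your route also forces the convention bookkeeping that the paper glosses over:
\begin{itemize}
\item With $\mathbf{\Gamma_{\uparrow}^{(h,h+1)}}\in\mathbb{R}^{|\mathcal{I}_{h+1}|\times|\mathcal{I}_h|}$, the off-diagonal block of $Q_t$ must be its transpose, and the lower-left block of $P_{t\mid s}$ must be $\mathbf{\Gamma_{\uparrow}^{(h,h+1)}}$ itself.
\item Under $q_{t\mid s}=\mathrm{Cat}(P_{t\mid s}\mathbf{z}_s)$, the cross-level product must be ordered $P_{t\mid t_j}\cdots P_{t_{i+1}\mid s}$. The full-level matrices of adjacent levels do not commute, so the printed order is inconsistent with that convention, not merely cosmetic.
\end{itemize}
Your explicit limit at $t=t_{h+1}$, where $Q_t$ is singular, also covers a point the paper leaves implicit.

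One hypothesis should be stated outright, since both arguments need it: $\alpha^h$ is $C^1$ and strictly positive on $[t_h,t_{h+1})$. This is what makes the coefficients continuous and the ratio $\alpha_t^h/\alpha_s^h$ well defined.
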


\begin{proof}
    Now we derive the conditional cumulative transition matrix $P_{t\mid s}$.
Let $t_h\leq s \le t \leq t_{h+1}$. 
\paragraph{Case 1: } Suppose $x_s = \Gamma_{\uparrow}^{h+1}(x)$. Then $x_s$ is already transmitted into the absorbing state of the level, and therefore with probability $1$ it stays in its current state.
\paragraph{Case 2: } Suppose $x_s = \Gamma_{\uparrow}^{h}(x)$.

The leaving rate of the state $x_s$ is
\[
\lambda_{x_s}(s) = - Q_s(x_s, x_s).
\]

The probability of staying in the state from $s$ to $t$ is:
\[
\Pr(\text{stay in } x_s \text{ on }[s,t]) = \exp\Bigl(-\int_s^t \lambda_{x_s}(u)\, du\Bigr).
\]

\paragraph{Case 2.1:} Suppose $x_t = x_s = \Gamma_{\uparrow}^{h}(x)$.

\begin{align}
P_{t\mid s}(x_t, x_s)
  &= \Pr(\text{stay in } x_s \text{ on }[s,t]) \\
  &= \exp\Bigl(\int_s^t \frac{{\alpha_u^h}'}{\alpha_u^h}\, du\Bigr)
   = \frac{\alpha_t^h}{\alpha_s^h}.
\end{align}

\paragraph{Case 2.2:} Suppose $x_t = \Gamma_{\uparrow}^{h+1}(x)$ and $x_s = \Gamma_{\uparrow}^{h}(x)$.

\begin{align}
P_{t\mid s}(x_t, x_s)
  &= \int_s^t
      \Pr\bigl(\text{stays in }\Gamma_{\uparrow}^{h}(x)
               \text{ from } s \text{ to } u\bigr)\,
      \lambda_{x_s}(u)\,
      \Pr\bigl(\text{stays in }\Gamma_{\uparrow}^{h+1}(x)
               \text{ from } u \text{ to } t\bigr)\, du \\
  &= \int_s^t
      \exp\Bigl(\int_s^u \frac{{\alpha_r^h}'}{\alpha_r^h}\, dr\Bigr)
      \Bigl(-\frac{{\alpha_u^h}'}{\alpha_u^h}\Bigr)\, 1\, du \\
  &= \int_s^t -\frac{\alpha_u^h}{\alpha_s^h}\, \frac{{\alpha_u^h}'}{\alpha_u^h} du \\
  &= \int_s^t -\frac{{\alpha_u^h}'}{\alpha_s^h}\, du \\
  &= -\frac{\alpha_t^h - \alpha_s^h}{\alpha_s^h}\\
   &= 1 - \frac{\alpha_t^h}{\alpha_s^h}.
\end{align}

The above cases compose the stated conditional transition matrix within each level. The general $P_{t|s}$ then follows from the markov chain property of our framework.
\end{proof}

\subsection{Derivation of General Reverse Transition Kernel} \label{Appendix:general-reverse-kernel}
Define $i$ and $j$ by the chain of thresholds
\[
h_{j-1} < s \le h_j \le \cdots \le h_{i-1} < t \le h_i .
\]
We first factorize over the intermediate threshold states using the Markov property:
\begin{align}
p_\theta\!\big(z_s, z_{h_{i-1}}, z_{h_{i-2}},\ldots,z_{h_j} \mid z_t\big)
&=
p_\theta(z_s \mid z_{h_j})
\left(\prod_{k=j}^{i-2} p_\theta(z_{h_k}\mid z_{h_{k+1}})\right)
p_\theta(z_{h_{i-1}} \mid z_t).
\label{eq:threshold-factorization}
\end{align}
Marginalizing over all intermediate threshold states then gives the general reverse transition:
\begin{align}
p_\theta(z_s \mid z_t)
&= \sum_{z_{h_{i-1}},\ldots,z_{h_j}}
p_\theta\!\big(z_s, z_{h_{i-1}},\ldots,z_{h_j} \mid z_t\big)
\label{eq:threshold-marginalization}\\
&= \sum_{z_{h_{i-1}},\ldots,z_{h_j}}
p_\theta(z_s \mid z_{h_j})
\left(\prod_{k=j}^{i-2} p_\theta(z_{h_k}\mid z_{h_{k+1}})\right)
p_\theta(z_{h_{i-1}} \mid z_t).
\label{eq:general-reverse-transition}
\end{align}
This implies that the cross-level reverse process first follows the predicted child mappings up to the nearest ancestor, and then applies the in-level reverse kernel starting from that predicted ancestor.

\section{Implementation Details}
\label{sec:imple-detail}
We adopt the DiT architecture~\citep{peebles2023scalable} with the GPT-2 tokenizer~\citep{radford2019language}, following the same implementation as prior work~\citep{rutte2025generalized, sahoo2024simple, zhou2025next}. Due to the much smaller output layer, to maintain a fair comparison, we train two model variants: SMALL, with 17 layers, 12 attention heads, and hidden dimension 768, and BASE, with 27 layers, 16 attention heads, and hidden dimension 1024. The total number of parameters of SMALL and BASE are similar or smaller than the corresponding models of prior works. 

All models are trained using identical settings to~\citep{rutte2025generalized}: a context length of 512, batch size 512, and 500k optimization steps, corresponding to 131B training tokens. Training of SMALL and BASE is conducted on 8 NVIDIA RTX 6000 48GB GPUs using bf16 mixed-precision. Ablation studies exclusively uses SMALL;training is conducted on 4 NVIDIA RTX 3080 24GB GPUs using bf16 mixed-precision with a batch size of 128. 

Optimization uses Adam~\citep{adam} with $\beta=(0.9,0.99)$ and $\epsilon=10^{-9}$, an initial learning rate of $5\times10^{-4}$ with 10k-step linear warmup followed by cosine decay to 10\% of the initial rate. We apply weight decay of 0.02 and gradient clipping with norm 1.0. For training stability, loss weights $w_{t,m}$ and $w_{t,c}$ are clipped to 2.0 or 10.0 during training, but left unclipped when evaluating the ELBO.

All denominators in loss and ELBO weights are clipped to $10^{-4}$. Sequences longer than 512 tokens are randomly truncated, while shorter sequences are padded to length 512; padding tokens contribute to the training loss but are excluded from ELBO evaluation.

In addition, during vocabulary tree construction, we follow~\citep{zhou2025next} by adopting token embeddings from the pretrained GIDD Small/Base models~\citep{rutte2025generalized} for the iterative K-means clustering procedure.

\section{Joint Modeling of A Neighborhood of Tokens}
\label{appendix:joint-modeling}
Discrete diffusion language models often rely on a convenient independence assumption over tokens, namely
$p(x_{1:S})=\prod_{s=1}^S p(x_s)$, where $S$ is the sequence length. This assumption is largely a practical necessity for token prediction, since the target space otherwise grows exponentially with the vocabulary size. At the same time, prior work has shown that jointly modeling local neighborhoods of discrete states can be effective~\cite{chao2025beyond}; however, the ``states'' being modeled jointly are individual bits in an $n$-bit representation of a token rather than the token itself. As a result, such bit-level joint modeling does not break the token-independence assumption.

The proposal of TDLM, particularly its novel child prediction mechanism, makes joint token modeling feasible by reducing the effective prediction target space. As a result, an otherwise intractable joint prediction problem can be handled by restricting attention to a predefined token neighborhood. For example, jointly modeling $16$ consecutive tokens yields a target space of size $2^{16}=65{,}536$, which is reasonable relative to typical vocabulary sizes. In this section, we briefly discuss how TDLM can be used to achieve partial joint modeling of tokens.

\textbf{Prediction Model.} Specifically, we modify the prediction network $\mathbf{x}_\theta^h(z_t^h,t)$, which currently predicts a probability mass distribution over the set of children $\Gamma_{\downarrow}^{h}(z_t^h)$ of $z_t^h$, assuming that $z_t^h \in \Gamma_{\uparrow}^{h+1}(x)$. Now consider a partition of the sequence $z_{t,1:S}^h$ into $N-1$ non-overlapping neighborhoods $\{z_{t,S_i:S_{i+1}}^h\}_{i=1}^{N-1}$, specified by boundary indices $1=S_1<\cdots<S_N=S$. We then relax token-wise independence to neighborhood-wise independence, and model the joint distribution of the entire sequence of states at time $t$ as
\[
p(z_{t,1:S}^h) \;=\; \prod_{i=1}^{N-1} p\!\left(z_{t,S_i:S_{i+1}}^h\right).
\]

Consequently, we define the neighborhood prediction model
\[
\mathbf{x}_\theta^h\!\left(z_{t,S_i:S_{i+1}}^h,t\right):
(\mathcal{I}_h \cup \mathcal{I}_{h+1})^L \times [0,1]
\;\longrightarrow\;
\Delta\!\Big(\mathcal{C}\!\left(z_{t,S_i:S_{i+1}}^h\right)\Big),
\]
where
\[
\mathcal{C}\!\left(z_{t,S_i:S_{i+1}}^h\right)
\;=\;
\prod_{\ell=1}^{L}\Gamma_{\downarrow}^{h}\!\left(z_{t,\ell}^{h}\right)
\]
is the Cartesian product of the child sets for each token in the neighborhood, $\Delta(\cdot)$ denotes the probability simplex over the indicated set, and $L$ is the neighborhood length.

However, not all elements in $\mathcal{C}\!\left(z^h_{t,S_i:S_{i+1}}\right)$ are valid prediction targets given the input neighborhood $z^h_{t,S_i:S_{i+1}}$. There are two common cases. First, if a position has already been transmitted to the next threshold, then its value is fixed and nothing needs to be predicted at that position. Second, depending on the tree construction, some nodes may have fewer than $K$ children, so the feasible target space can be further reduced. Therefore, we mask out all impossible targets by assigning them zero probability, equivalently setting their logits to $-\infty$.

Concretely, for each position $\ell\in\{1,\dots,L\}$ in the neighborhood, define the feasible set of targets
\[
\mathcal{A}_\ell\!\left(z^h_{t,S_i:S_{i+1}}\right)
:=
\begin{cases}
\{z^h_{t,S_i+\ell-1}\}, 
& \text{if } z^h_{t,S_i+\ell-1}\in \mathcal{I}_{h},\\[4pt]
\Gamma_{\downarrow}^{h}\!\left(z^h_{t,S_i+\ell-1}\right),
& \text{if } z^h_{t,S_i+\ell-1}\in \mathcal{I}_{h+1}.
\end{cases}
\]
We define the valid joint target space as the Cartesian product of these per-position feasible sets,
\[
\mathcal{C}_{\mathrm{valid}}\!\left(z^h_{t,S_i:S_{i+1}}\right)
:=
\prod_{\ell=1}^{L}\mathcal{A}_\ell\!\left(z^h_{t,S_i:S_{i+1}}\right)
\;\subseteq\;
\mathcal{C}\!\left(z^h_{t,S_i:S_{i+1}}\right).
\]
The masked prediction distribution is then given by
\[
\mathbf{x}^{h}_{\theta}\!\left(z^h_{t,S_i:S_{i+1}},t\right)[c]
=
\begin{cases}
\displaystyle
\frac{\exp\!\big(E_{\theta}(c \mid z^h_{t,S_i:S_{i+1}},t)\big)}
{\sum\limits_{c' \in \mathcal{C}_{\mathrm{valid}}(z^h_{t,S_i:S_{i+1}})}
\exp\!\big(E_{\theta}(c' \mid z^h_{t,S_i:S_{i+1}},t)\big)},
& \text{if } c \in \mathcal{C}_{\mathrm{valid}}\!\left(z^h_{t,S_i:S_{i+1}}\right),\\[10pt]
0, & \text{if } c \notin \mathcal{C}_{\mathrm{valid}}\!\left(z^h_{t,S_i:S_{i+1}}\right).
\end{cases}
\]

 \begin{figure}[t]
  \centering
  \includegraphics[width=0.6\linewidth]{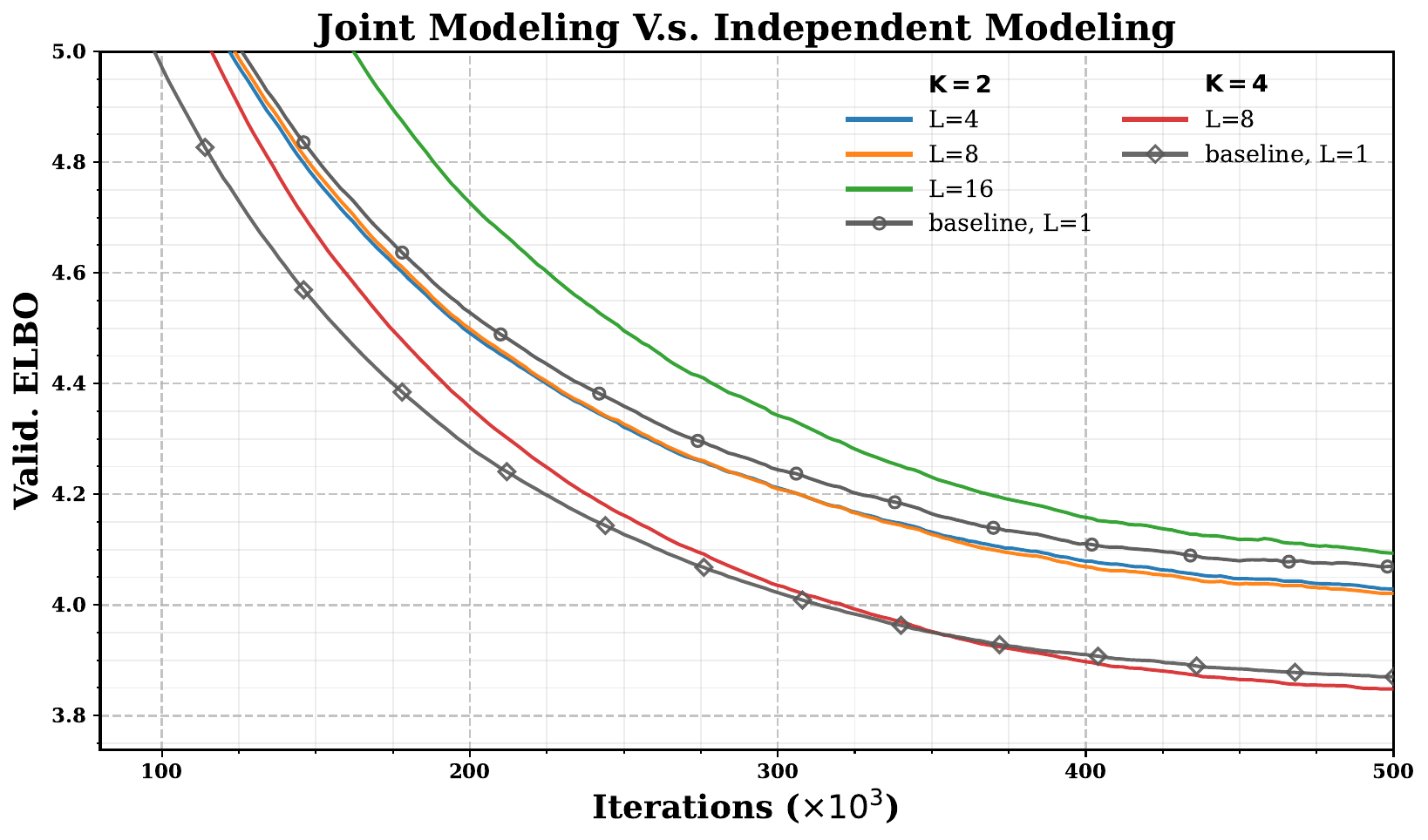}
  \caption{Smoothed validation ELBO for joint modeling with different branching factor $K$ and neighborhood length $L$.}
  \label{fig:joint-ablation}
\end{figure}
\textbf{Architecture Design.} The proposed joint modeling formulation requires only minimal modifications to the original backbone model. Although our definition of the predictor $\mathbf{x}^{h}_{\theta}\!\left(z^h_{t,S_i:S_{i+1}},t\right)$ enlarges its input domain to a neighborhood, this does not change the backbone model's input in practice, since the model already processes the entire sequence altogether. The key architectural change is therefore confined to the prediction head. To produce a joint distribution for a neighborhood, the model must first aggregate information from all tokens within that neighborhood into a single representation before the output layer. Two standard aggregation choices are average pooling and concatenation of the neighborhood token embeddings. In our experiments, concatenation consistently provides substantially better modeling capacity than average pooling. We attribute this to the stronger information bottleneck induced by averaging, which compresses the neighborhood into a single mean vector and can blur token-specific signals, whereas concatenation preserves per-token features and scales the representation dimension proportionally with the neighborhood length $L$.

\textbf{ELBO of Joint Modeling.} The ELBO for a neighborhood under joint modeling has exactly the same form as the ELBO for a single token in the original setting if we abstract each neighborhood and treat it as if it were a single token. However, the ELBO of the joint modeling is in nature $L$ times larger than that of independent-token baselines. Therefore, to enable a fair comparison with the baselines, we report a per-token ELBO for joint modeling by averaging the ELBO of joint modeling over all tokens in the neighborhood of length $L$.

\textbf{Preliminary Results.} We evaluate joint neighborhood modeling using the same SMALL DiT backbone as in our ablation studies. For $K=2$, we vary the neighborhood length $L\in\{4,8,16\}$; for $K=4$, we use $L=8$. The resulting joint classification space has size $2^{L}\in\{16,256,65{,}536\}$ for $K=2$, and $4^{8}=65{,}536$ for $K=4$. As shown in Fig.~\ref{fig:joint-ablation}, joint modeling exhibits slower early-stage convergence than token-independent modeling, but steadily closes the gap and can surpass the independent baseline in later training. Moreover, increasing $L$ tends to slow the convergence, which is expected since larger neighborhoods induce a substantially larger effective target space and require more optimization to learn local dependencies. Once learned, these dependencies translate into improved modeling capacity, leading to better final performance than independent token modeling.

However, the benefit from joint modeling is comparatively small relative to the effect of a varying tree structure: even the best $K=2$ joint-modeling configuration remains worse than the weakest $K=4$ setting. We conjecture that this gap may narrow at larger model scales, where deeper trees could improve faster and joint modeling may also benefits from additional capacity. We leave a systematic investigation of this scaling behavior to future work.

\section{Level-wise Weight Schedule}
\label{section:weight-schedule}
In ablation studies~\ref{sec:ablation}, we introduce level-wise weights to investigate the model's optimization in each level. Here we provide details on the linear and exponential weight schedules. 

We implement two simple level-wise reweighting schedules to upweight higher levels during training. Given the level indice $\beta \in \{0,\dots,H-1\}$, both routines compute a monotonically increasing weight $w(\beta)$, then mean-normalize the weights over all levels so that $\mathbb{E}[w(\beta)]= 1$, which stabilizes optimization by preserving the overall loss scale. The exponential schedule sets
\[
w(\beta)=\exp\!\bigl(\gamma_{\mathrm{exp}}\, \beta\bigr),
\]
where $\gamma_{\mathrm{exp}}$ controls how aggressively later levels are emphasized. The linear schedule instead uses
\[
w(\beta)=1+\gamma_{\mathrm{lin}}\cdot \frac{\beta}{H-1},
\]
providing a gentler and more interpretable increase. See Figure~\ref{fig:level-weight-schedules}.

\begin{figure}[thbp]
    \centering
    \begin{subfigure}[t]{0.45\linewidth}
        \centering
        \includegraphics[width=\linewidth]{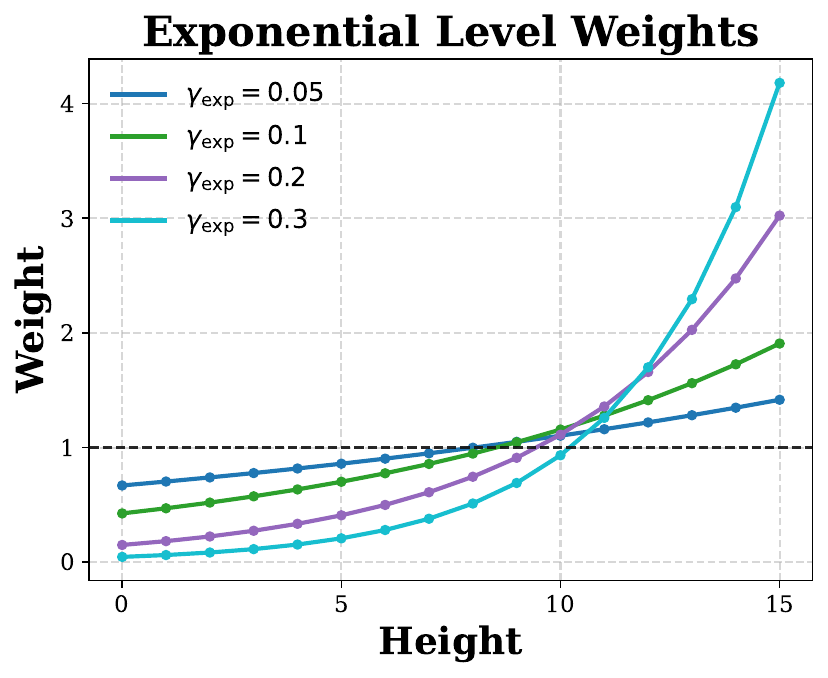}
        \caption{Exponential schedule.}
        \label{fig:level-weights-exp}
    \end{subfigure}
    \hfill
    \begin{subfigure}[t]{0.45\linewidth}
        \centering
        \includegraphics[width=\linewidth]{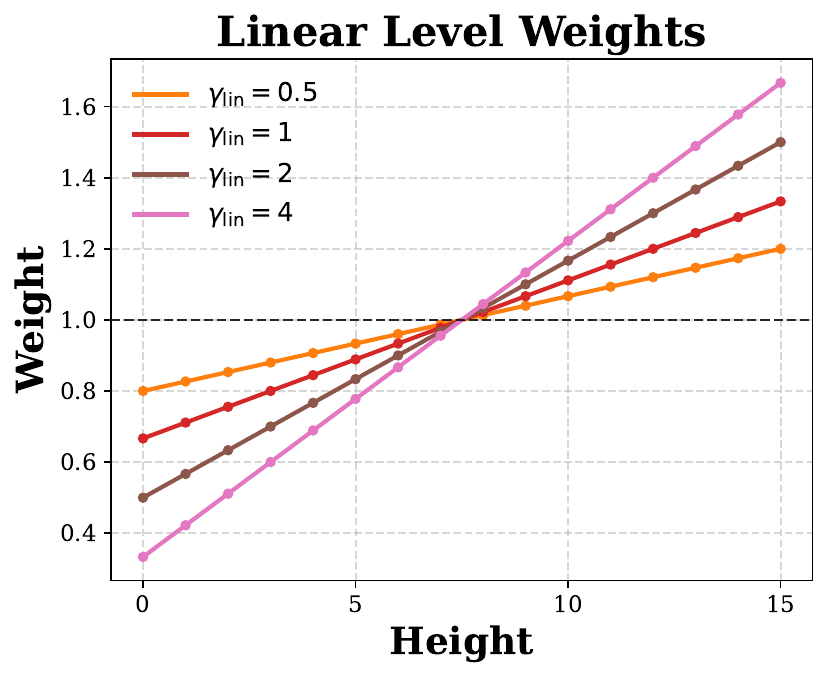}
        \caption{Linear schedule.}
        \label{fig:level-weights-lin}
    \end{subfigure}
    \caption{Level-wise weighting schedules used in our ablations.}
    \label{fig:level-weight-schedules}
\end{figure}

\section{Algorithms}
\subsection{TDLM Main Algorithm}
We follow the standard training framework for discrete diffusion language models and instantiate the corresponding loss under our tree structure, as summarized in Algorithm~\ref{alg:tdlm-heightwise-knary-loss}.

\begin{algorithm}[htbp]
\caption{TDLM Loss Function}
\label{alg:tdlm-heightwise-knary-loss}
\textbf{Input:} logits $\mathbf{L}\in\mathbb{R}^{B\times S\times K}$; tokens $\mathbf{x}\in\{0,\dots,V-1\}^{B\times S}$; noisy state $\mathbf{z}_t$; times $t$; tree structure $\mathcal{T}$; noise schedule $\mathcal{S}$ \\
\textbf{Output:} Time/height weighted loss map $\mathbf{J}\in\mathbb{R}^{B\times S}$ and ELBO map $\mathbf{E}\in\mathbb{R}^{B\times S}$ \\[2pt]

\textbf{Auxiliary functions:} \\
\hspace*{1.2em} $\textsc{NodeAtHeight}(\mathcal{T},x,h)$: the ancestor node of token $x$ at height $h$. \\
\hspace*{1.2em} $\textsc{ChildIndex}(\mathcal{T},x,h)\in\{1,\dots,K\}$: the ground truth child label for token $x$ at height $h$. \\
\hspace*{1.2em} $\textsc{ChildMask}(\mathcal{T},u)\in\{0,1\}^{K}$: which child slots of node $u$ exist. \\
\hspace*{1.2em} $\textsc{IsToken}(z)$: true if $z$ already corresponds to a vocabulary token. \\
\hspace*{1.2em} $\textsc{MaskToNegInf}(\mathbf{L},\Omega)$: replace $L_{b,s,k}$ with $-\infty$ whenever $\Omega_{b,s,k}=0$. \\[4pt]

\textbf{(1) Collect weights, height, and vocab info} \\
\hspace*{1.2em} $(\mathbf{w}_t,\tilde{\mathbf{w}}_t)\leftarrow \textsc{TimeWeights}_{\mathcal{S}}(t)$ \\
\hspace*{2.2em} \emph{// $\mathbf{w}_t$: raw time weight; $\tilde{\mathbf{w}}_t$: clipped/stabilized time weight for training} \\
\hspace*{1.2em} $\mathbf{h}\leftarrow \textsc{HeightFromT}_{\mathcal{S}}(t)$
\\\hspace*{1.2em} $\mathbf{w}_h\leftarrow \textsc{HeightWeights}(\mathbf{h})$ \\
\hspace*{1.2em} $\mathbf{vocab}\leftarrow \textsc{IsToken}(\mathbf{z}_t)$ \\[6pt]

\textbf{(2) Construct the classification problem at height $\mathbf{h}$} \\
\ForEach{position $(b,s)$}{
    $u_{b,s}\leftarrow \textsc{NodeAtHeight}(\mathcal{T},x_{b,s},h_b)$ \quad \emph{// current node of token $x_{b,s}$ at height $h_b$} \\
    $y_{b,s}\leftarrow \textsc{ChildIndex}(\mathcal{T},x_{b,s},h_b)$ \quad \emph{// groundtruth label of children prediction for $x_{b,s}$ at height $h_b$} \\
    $\Omega_{b,s,:}\leftarrow \textsc{ChildMask}(\mathcal{T},u_{b,s})$ \quad \emph{// valid children for this node} \\
    $v_{b,s}\leftarrow \mathbb{1}\!\left[z_{t,b,s}= u_{b,s}\right]$ \quad \emph{// not yet transitioned at height $h_b$} \\
    $\mathbf{valid}_{b,s}\leftarrow (\neg \mathbf{vocab}_{b,s})\ \wedge\ v_{b,s}$ \quad \emph{// valid states are not vocabulary and not transmitted within the level}
}
\textbf{(3) Masked cross-entropy} \\
\hspace*{1.2em} $\mathbf{L}'\leftarrow \textsc{MaskToNegInf}(\mathbf{L},\Omega)$ 
\quad \emph{// invalid children ($\Omega=0$) receive zero probability under softmax} \\
\hspace*{1.2em} $\boldsymbol{\ell}\leftarrow \mathrm{CE}(\mathbf{L}',\mathbf{y})$ \quad \emph{(per-position, no reduction)} \\
\hspace*{1.2em} $\boldsymbol{\ell}\leftarrow \boldsymbol{\ell}\odot \mathbf{valid}$ \\[4pt]

\textbf{(4) Apply weights} \\
\hspace*{1.2em} $\mathbf{J}\leftarrow \boldsymbol{\ell}\odot \tilde{\mathbf{w}}_t\odot \mathbf{w}_h$ \\
\hspace*{1.2em} $\mathbf{E}\leftarrow \boldsymbol{\ell}\odot \mathbf{w}_t$ \\[2pt]

\textbf{return} $\mathbf{J}$ and $\mathbf{E}$
\end{algorithm}

\subsection{Tree Construction Algorithm}
We adopt the clustering algorithm provided in HDLM \citep{zhou2025next}. Using this clustering algorithm (including clustering and size control), we formulate Algorithm \ref{alg:tree_construct} to build the semantic tree used to train TDLM. 
\newpage
\begin{algorithm}[htbp]

\caption{Fixed-$K$ Semantic Vocabulary Tree Construction}
\KwIn{Token embeddings $\{\mathbf{e}_i\}_{i=1}^V$, branching factor $K$, maximum depth $D_{\max}$, cluster size ratio $\min/\max$}
\KwOut{A $K$-ary semantic tree $\mathcal{T}$ and token-to-leaf paths}

\textbf{Initialization:}\;

Initialize root node containing all $V$ tokens\;

Compute root centroid as the mean embedding\;

Initialize a queue with the root node\;

\While{queue is not empty}{
    Pop a node $n$ with token set $\mathcal{M}_n$\;
    
    \If{stopping criteria satisfied (leaf size or depth limit)}{
        Continue\;
    }
    
    \If{$|\mathcal{M}_n| < K$}{
        Partition $\mathcal{M}_n$ into $|\mathcal{M}_n|$ singleton children\;
    }
    \Else{
        Apply semantic $K$-means clustering to embeddings of $\mathcal{M}_n$\;
        (Algorithm 1 and 2 of \citet{zhou2025next})\;
        
        Assign tokens to clusters\; 
    }
    
    \ForEach{child cluster}{
        Create a child node with its token subset and centroid\;
        
        Add the child to the queue\;
    }
}

Compute token-to-leaf paths by traversing the tree\;

\If{forcing a complete tree}{
    Determine the maximum leaf height\;
    
    Extend all shorter leaves by repeating the final branching choice until all leaves reach the same height\;
    
    Recompute token-to-leaf paths\;
}

\Return{semantic tree $\mathcal{T}$ and token paths}

\label{alg:tree_construct}
\end{algorithm}

\begin{table}[H]
\centering
\footnotesize
\setlength{\tabcolsep}{6pt}
\renewcommand{\arraystretch}{1.2}
\begin{tabularx}{\linewidth}{X}
\toprule
\textbf{Qualitative Examples: Generated Text With 512 Sampling Steps} \\
\midrule
Main difference is that much like the economic ladder movement sparked in Ferguson, that small town, citizens concerned about police misconduct, from labor activists to hardcore progressive institutions, all gathered. The message of income linkage was all at the center, the widespread Streeck discontent, communities that felt much better off than individuals once were. Their voice was more relevant than activists giving rise to the message, but the main driver was inherent inequality, and that is a large part of what was the gap between poverty and income.
You likely didn't see a bold income linkage message from the many pro-dissinity Democrats who urge economic polarization and there just weren't many answers calling for policy that's harmful for the needy, either.

A message: Income linkage led by affirmative Democratic leaders might not resonate with low-income individuals who are housing survivors in their lives and have deeper ties to the Left where their previous-class status still allows. Further to these aspects, I argued that if the income scale were off-course for theoretical days it would have been left to the working and middle class– (these elements that even the with the most political willingness and ability feel marginalized by left-progressive leadership long deserve at the bottom of the income scale). The message might have expanded and supersized by taking some finer steps that made sense in fighting intolerance and poverty. The message would have seen a left step up if centrists opted to blame the plight of some low-income people on others, but even then the progressive movement likely wouldn't have topped paychecks among all the lowest-income earners.

We would have made policies anti-poverty if we appealed to only the least, those groups with more at stake in engaging in poverty reduction discourse. We needed to reject these political science to recognize that certain anecdotes were more uncomfortable in the middle of (status.) Burrow specifically foresaw that the Obama administration's opportunities to act were controlled by the anti-poverty speculation we are now consistently dealing with. (Especially Washington policymakers who this moment have been willing to overlook a cohesive progressive agenda.)

We dispelled the decision to make legal requirements for the poor more economically viable, welfare ambassador time was spent personally with meaning more cash for community organizations and the race to a grass of good is surely over. Burrow is right: we were policies that hold the stronger voice for the affirmative in the larger segment that fails to involve message reaching the poor or status. Income disparities discourse were not extraneous to ending \\
\midrule
Gender advocacy on workers relationship prime in life should be: The daunting struggle of life. Women with single mothers, women who are single professionals, and, in particular, busy men most easily ought to prefer leading a wonderful life to, as we have written previously, women imposed by more extraordinary circumstances and accomplures. Yet, both men run more well than women, while gender attitudes on differences of adolescence break up entirely between the two and into one getting more firmly briefed, the other is often emotionally closer to men. Both seek a lot farther from the ideal of life for all, while both demographic groups still survives.

As for Scientific opinion about adolescence—in short is ``Live Long, Die Long." In commonly the wisdom and insight of teen Vikings readers that Dr. Dana Reeer had compared to the scientific evidence, like childhood drug toxicity, or thyroid problems. or, even more expound by experts, research that finds that children and teens, even people in their late 30s — and not adults — may have been most prone to physical, emotional, violence, and hostile home environment.

The signs of adolescent health and well-being set off a mutual struggle between those fears and attitudes of teens that belongs to both sexes, Reeer writes: For those who have research find to adults, after 30 or young adulthood, think of body mass index and life expectancy as such fifth percent degree, while teens themselves are in the 70s. The sciences, attitudes and mentations of adults, critical focus into childhood undo the accumulated patterns of girls and women now and over their lives and and place an increased value on women's emotional identity and well-being.

Regardless of expenditures and people's incomes, women are more hands-on on their lives than men, and more women are able to resolve the baseline blurred in emotions nor behaviors, Dr. Reeer writes for the Disbalance, as well as to address the differences between men and women. But to add to that indignant attitude uttered by both genders toward people who are later jealous, more girls when single are able to avoid focus as prepared for these first impulses of adulthood, and more able to weave an emotion removed longer from ignoring the differences of youth and partner than dispatch on keeping sides of the root cause of neglect that momentarily affects so much of your baseline survivability.
But you cannot only focus on emotions and critical emotion and filling comfort gaps as we have written in Getting started on Enthusiasm Approaches, Mistakes, Trans \\
\bottomrule
\end{tabularx}
\caption{Qualitative examples generated at 512 sampling steps.}
\label{tab:sampling-examples}
\end{table}

\begin{table}[H]
\centering
\footnotesize
\setlength{\tabcolsep}{6pt}
\renewcommand{\arraystretch}{1.2}
\begin{tabularx}{\linewidth}{X}
\toprule
\textbf{Qualitative Examples: Generated Text With 1024 Sampling Steps} \\
\midrule

Greenfield, for his part, cited internet-based super PAC reports of second-hand contributions, judged by pundits, up to \$22 million in the last election from super PACs. But he cited the Independent Spending Council considered an election spending association, in recognition that super PACs stand foremost as top-spending" groups on political activities, that registered firms should accept their spending and conducting organizations.

Super PACs now have the Federal Election Commission and redistricting Democrats with the recent Supreme Court decision as their reins, agree.

Republican leaders like the LP cheered the Citizens United ruling and observed that Obama's federal campaign finance laws are comfortably toward being struck down, again.

``I think of how the Republican presidential candidates see a positive step here in Citizens United," said Republican National Committee Chairman David Walker Jr. R-Ohio, of clogging Obama's individual mandate ``wizards" toward the ``enormancy and ignorance" of the Capitol Hill office of Senate Majority Leader Harry Reid Jr., D-California. But even he's not even going to seek to gut the opposing precedent of Citizens United.

In the protest of Democrat leaders stepping to the party after Long's announcement included Capitol Hill's Jack Reed (R., Virginia). A chief, conservative font included: Rep. Nancy Pelosi, Chairwoman of the U.S. Securities \& Exchange Commission.

Stern said there has been financial consensus in the House of Representatives to get him to recuse himself from benefiting from secret money used by super PAC groups for political activity and ban groups and members from voting against them.

The coalition created to protest against Tuesday's decision was organized between Long and Pai, stars and various top actors, including the Allison Wives magazine run, who via the LP.

They endorsed the Mega Finance statement as ``an inclusive, politically empowered multicultural coalition mixed with an intelligent record of interventions on the racial and sexual diversity of the political financing landscape." The firm includes Mega Finance Chairman Mark Michael Ficloma; Richard Mache Perez, the chief economist of Prudential Advisors; Taylor Jin, VP Senior Chairman at Federal Bank of Chicago and Hallie Douglas (MO), Chairman of National Rentali Corp.; and Jacqueline Rosen, president and financial advisor\\ \midrule
The health care system is also very problematic. Private insurance has received an ample chunk too little for it to open drug facilities in some parts of the market, and Private rated providers cannot at first take return, on subscriber money to open free rehab facilities. Fortunately, there is now part of the Affordable Care Act's tax breaks for certificates in Pharmacology Management Education (POAC) and other health care necessities. Obamacare's subsidies will directly support researchers at all federal levels for reduction in traffic deaths due to opioids. The subsidies will help testing of "planning techniques" and drug-education programs for inmates seeking outpatient treatment.

One possibility for the largest benefit of ministerial prisons specifically when considering drug punishments is the ability to assist inmates, who want to be helped, into treatment via co-program of forbearance to take some psychapeptic drugs. Programming based on a lesser degree of forbearance is also typical for most medical insurance companies to purchase some inexpensive prescription benefit.

Better still, however, is changes in medical insurance laws. A much newer short-term insurance plan can render treatment effective in varies and drug abusers by removing expensive treatment for those patients. Often, additional long-term treatment is preceded by a medical spending of outreach premiums 36\% or much more reduced, which lead to changes in daily legal killers that propelled opioid users and changed drug symptoms. Sometimes, it is appropriate to limit ministerial-lock policies by other means, such as if TTE+ or tertiary palliative, while purpose that do not necessarily require its expected derive.

Based on federal offender registries stat is about 1.2 million Americans die from overdose each year.

The interpretation system of drug abuse from those that have really no delusion is in addition to multiple consistent requirements: ward off the black-and-white idea of sufficient drug funding. However, the recent BBC documentary study showed that 80\% of prison opioids have far less major issues than even a first year's ``white" virus in the commonly-ignored norm. It found that the rate of past reports of adverse effects seen for every 5,000 IHA members is 12\%, and the 2015 estimated rate of urine inhaled hazardous is 3.9\%.

Those crucial findings which reveal Bad-ass Prison Revenge will likely lead to a very restrictive medical norm. The public suffers from a future in which 4 drug prison gang and 5 giro-club spread for most drug dealing and where new convictions, convicts not released, are quickly giving drug
 \\
\bottomrule
\end{tabularx}
\caption{Qualitative examples generated at 1024 sampling steps.}
\label{tab:sampling-examples}
\end{table}

\end{document}